\newtheorem{lemma}{Lemma} 
\newtheorem{thm}{Theorem} 
\theoremstyle{definition}
\newtheorem{defn}{Definition}
\theoremstyle{remark}
\theoremstyle{definition}
\newenvironment{proof}{\textbf{Proof:}\ }{\hspace{\stretch{1}}$\square$\\}
\begin{document}
\begin{frontmatter}
\title{Distribution-Free One-Pass Learning}

\author{Peng Zhao}
\author{Zhi-Hua Zhou\corref{cor1}}
\address{National Key Laboratory for Novel Software Technology\\
Nanjing University, Nanjing 210093, China} \cortext[cor1]{\small Corresponding author.
Email: zhouzh@nju.edu.cn}

\begin{abstract} 
In many large-scale machine learning applications, data are accumulated with time, and thus, an appropriate model should be able to update in an online paradigm. Moreover, as the whole data volume is unknown when constructing the model, it is desired to scan each data item only once with a storage independent with the data volume. It is also noteworthy that the distribution underlying may change during the data accumulation procedure. To handle such tasks, in this paper we propose DFOP, a distribution-free one-pass learning approach. This approach works well when distribution change occurs during data accumulation, without requiring prior knowledge about the change. Every data item can be discarded once it has been scanned. Besides, theoretical guarantee shows that the estimate error, under a mild assumption, decreases until convergence with high probability. The performance of DFOP for both regression and classification are validated in experiments.
\end{abstract} 

\begin{keyword}
data stream \sep non-stationary learning \sep one-pass learning \sep time-series
\end{keyword}
\end{frontmatter}

\section{Introduction}
With a rapid growth in collecting data, the volume of data generated makes it a challenge for traditional machine learning approaches. The main challenges are multi-faced, in general, \textit{accumulating} and \textit{evolving} properties are two of the most troublesome issues.

For the accumulating property, apparently, it's impractical to store all the data entirely due to the limitation of memory and computation resources. Hence, an offline approach is not suitable any more in such tasks. Only in an online processing paradigm can prediction models be trained and updated incrementally. It's also worthy to mention that an online streaming approach is called one-pass when it requires going through the training data only once without storing the entire dataset. The reason to pursue one-pass property is due to the fact that sometimes the raw data is discarded or no longer accessible after being processed. A one-pass approach guarantees the learning process be independent with the volume of data stream. Apparently, it is much more demanding and difficult.

Furthermore, the evolving nature of data stream also makes it challenging to directly apply traditional machine learning approaches because it's not reasonable to assume that the current and future data are coming from the same distribution any more. In a non-stationary environment, which is very common in data generation process, the distribution underlying is likely to dynamically change over time. For instance, the clicking information collected in recommendation system is certainly evolving because customers' interests probably change when looking through the product pages. Another example is credit scoring, the criteria of credit granting should properly alter since a changing economic conditions would have a great influence on people's manner. Such phenomena above are typical examples of \textit{distribution change}. Under this scenario, the performance of traditional approaches dramatically drop down and thus are not empirically and theoretically suitable for these tasks.

To simultaneously address these two issues, in this work, we propose DFOP, a distribution free one-pass learning approach to deal with data stream emerging distribution change under one-pass constraints. The advantages of our approach are following: firstly, by recursively solving the target, we guarantee that only one time would data stream be gone through. Secondly, based on a forgetting mechanism, the loss of older data is discounted without explicitly modelling the dynamics or assuming prior information about distribution change. In streaming regression, a theoretical guarantee is presented showing that the estimate error of dynamic concept would decrease until convergence with high probability. Meanwhile, empirical experiments on both synthesis and real-world datasets indicate the effectiveness and practicability of DFOP in both regression and classification scenarios.

The rest of this paper is organized as follows. In Section \ref{sec-related}, we briefly review of some related work. Then, static scenario model is introduced in preliminaries. Next, in the non-stationary environment, DFOP is presented to handle dynamics in regression and classification scenarios. Both theoretical guarantee and empirical effectiveness have been examined. Finally, we conclude the paper.

\section{Related Work}
\label{sec-related}
\textbf{Online and One-Pass Algorithms.} With a rapid growth of data volumn and velocity, it's no longer practicable to adopt offline mode algorithm for streaming data learning tasks. Hence, online style algorithms become gradually attractive which update the current model with the most recent examples. In general, they are error driven, updating the current model depending on whether the current example is misclassified~\cite{journals/csur/GamaZBPB14}. Representative algorithms include Perceptron~\cite{journal/rosenblatt1958perceptron}, Winnow~\cite{journals/ml/Littlestone87} and many other variations. Besides, a paradigm of “prediction with expert advice”~\cite{book/Cambridge/cesa2006prediction} also inspires some interesting works, such as AddExp\cite{conf/icml/KolterM05} and DWM\cite{conf/icdm/KolterM03,journals/jmlr/KolterM07}. Most of those approaches require to store the entire or partial training data and scan data items multiple times. Recently, one-pass algorithms gradually draw more attentions demanding that each data item should be processed only once. Concretely speaking, after a data item has been processed and relevant statistics have been stored, the raw data item should be discarded and never be accessed any more. Obviously, one-pass constraints impose a higher degree of difficulty on algorithm design. Some efforts have been devoted~\cite{conf/nips/WuBSD16}.

\textbf{Nonstationary Learning.} Owing to the effectiveness and simplicity, \textit{sliding window} is usually adopted to handle data stream with distribution change. It only uses a fixed or variable number of recent data which are the most informative for current prediction~\cite{journals/ml/Littlestone87,journals/ida/Klinkenberg04}. Usually, the model built is updated following two processes: one is a learning process, i.e., updating the model based on the  new coming data; the other one is a forgetting process, i.e., discarding data items that are moving out of the window~\cite{journals/csur/GamaZBPB14}. However, how to choose an appropriate window size is of great importance which now mainly depends on heuristics to a certian extent. Some efforts have been paid to select window size adaptively~\cite{journals/ida/Klinkenberg04,conf/sbia/GamaMCR04,journals/ida/KunchevaZ09}. The common strategy to adjust window size is based on the performance or estimate of generalization error. SVM-ada~\cite{journals/ida/Klinkenberg04} presents a theoretically supported approach , however, the computational efficiency issue makes it not practical in real-world applications. 

Our proposed approach DFOP, short for Distribution-Free One-Pass, is a one-pass style algorithm, i.e., it could guarantee that only one time will data items been gone through. Besides, DFOP is distribution-free, i.e., different from those traditional approaches dealing with distribution change, we did not explicitly model the dynamics, and no prior information about distribution change is assumed. 

\section{Preliminaries}
In this part, streaming regression model in a static scenario is briefly introduced. 

In a streaming scenario, we denote a labeled dataset as $\{\mathbf x(t),y(t)\}$, where $\mathbf x(t)$ is the feature of the $t$-th instance and $y(t)$ is a real-valued output. Furthermore, we assume a linear model as follows, 
\begin{equation}
y(t) = \mathbf x(t)^{\rm T} \mathbf w(t-1)+\epsilon(t)
\end{equation}
where $\{\epsilon(t)\}$ is the noise sequence, $\{\mathbf{w}(t)\}$ is what we desire to estimate.

When in a static scenario, the sequence $\{\mathbf{w}(t)\}$ is a constant vector denoted as $\mathbf{w}_0$. Then, the least square could be adopted to minimize the residual sum of squares, which has a close-form solution.
However, it fails when adding an online/one-pass constraint which demands the raw item is discarded after it has been processed. Recursive least square (RLS) and stochastic gradient descent (SGD) are two typical approaches to solve this problem in an online paradigm.

When in a non-stationary environment, especially  when the distribution underlying changes, traditional approaches are not suitable since we could never expect the typical i.i.d assumption continue to work any longer. In the next sections, we propose to handle this scenario based on exponential forgetting mechanism without explicitly modelling the evolution of data stream, and theoretical support and empirical demonstration are presented.

In the following, $\lVert \cdot \rVert$ denotes the $\ell_2$-norm in $\mathbb{R}^n$ space. Meanwhile, for a bounded real-valued sequence $\{x(t)\}$, $x^*$ denotes the upper bound of sequence, namely, $x^* = \sup_{t=1,2,\cdots} x(t)$.

\section{Distribution Free One-Pass Learning}
\label{section-3.1}
Since the sequence $\{\mathbf{w}(t)\}$ is changing over time in a dynamic environment, it is no longer reasonable to estimate current (i.e., at time $t$) concept via methods introduced previously. Instead, we introduce a sequence of discounted factors $\{\lambda(t)\}$ to downweight the loss of older data as follows,
\begin{equation}\label{dynamic-discount-targ}
\hat{\mathbf{w}}(t) = \mathop{\arg\min}_{\mathbf{w}\in \mathbb{R}^d}  \sum_{i=1}^t (\prod_{j=i+1}^t \lambda (j))\left[y(i) - \mathbf x(i)^\mathrm{T}\mathbf{w} \right]^2,
\end{equation}
where $\lambda(i) \in (0,1)$ is a discounted factor to smoothly put less weight on older data. The intuition can be more easily obtained if we simplify  all $\lambda(i)$ as a constant $\lambda \in (0,1)$, then the target function is,
\begin{equation}\label{dynamic-forgetting-targ}
\hat{\mathbf{w}}(t) = \mathop{\arg\min}_{\mathbf{w}\in \mathbb{R}^d} \ \sum_{i=1}^t \lambda^{t-i}\left[y(i) - \mathbf x(i)^\mathrm{T}\mathbf{w} \right]^2,
\end{equation}
And the quantity $\mu \triangleq 1-\lambda$ is named as \textit{forgetting factor}~\cite{book/Pearson/haykin2008adaptive}. The value of forgetting factor is, as a matter of fact, a trade-off between stability of past condition and sensitivity of future evolution. 

It should be pointed out that the forgetting mechanism based on exponential forgetting factor could be also considered as a continuous analogy to sliding window approach to some extent. The older data items with a small enough weight can be somehow thought as exclusion from the window. Some. More discussions on relation with window size and forgetting factor are provided in Section \ref{subsection-5.3-parameter}.

\subsection{Algorithm}
For the optimization problem proposed in~\eqref{dynamic-forgetting-targ}, obviously, by taking derivative of the target function, we can directly obtain the optimal solution in a closed-form,
\begin{equation}
\small
\label{forgetting-close-form}
[\mathbf{w}(t)]_{opt} = \left[\sum_{i=1}^t \lambda^{t-i} \mathbf x(i)\mathbf x(i)^\mathrm{T}  \right]^{-1} \left[\sum_{i=1}^t \lambda^{t-i} \mathbf x(i)y(i) \right]
\end{equation}

\begin{algorithm}[h]
   \caption{Distribution Free One-Pass Learning}
   \label{alg:DFOP}
\begin{algorithmic}
   \STATE {\bfseries Input:} A stream of data with $\{\mathbf{x}(t), \{y(t)\}_{t=1\cdots T}$, forgetting factor $\mu\in(0,1)$;
   \STATE {\bfseries Output:} Prediction $\{\hat{y}(t)\}_{t=1\cdots T}$ (real value for regression and discrete-value for classification).
   \STATE {Initialize $P_0>0$};
   \FOR{$t=1$ {\bfseries to} $T$}
   \STATE {$P(t) = \frac{1}{1-\mu}\left\lbrace P(t-1) - \mu \frac{P(t-1)\mathbf x(t)\mathbf x(t)^\mathrm{T}P(t-1)}{1-\mu+\mathbf x(t)^\mathrm{T}P(t-1)\mathbf x(t)}\right\rbrace$};  
   \STATE {$L(t) = P(t) \mathbf x(t)$};
   \STATE {$\hat{\mathbf{w}}(t) = \hat{\mathbf{w}}(t-1) +  \mu L(t)[y(t)-\hat{\mathbf{w}}(t-1)^\mathrm{T}\mathbf x(t)]$};
   \STATE {$\hat{y}(t) = \hat{\mathbf{w}}(t)^\mathrm{T}\mathbf x(t)$. \quad \quad \  // for regression};
   \STATE {$\hat{y}(t) = \textbf{sign}[\hat{\mathbf{w}}(t)^\mathrm{T}\mathbf x(t)].$ // for classification}
   \ENDFOR    
\end{algorithmic}
\end{algorithm}
However, above expression is an off-line estimate, namely, all the data items ahead of $t$ are needed. Instead of repeatedly solving \eqref{forgetting-close-form}, we estimate the underlying concept by adding a correction term to the previous estimate based on the information of new coming data item. With the forgetting factor recursive least square method~\cite{book/Pearson/haykin2008adaptive}, we could solve the target \eqref{dynamic-forgetting-targ} in a one-pass paradigm. And to the best of our knowledge, this is the very first time to adopt traditional forgetting factor RLS to deal with such tasks with distribution change under the one-pass constraints. And we named this as DFOP(short for Distribution-Free One-Pass) summarized in Algorithm~\ref{alg:DFOP}.

Besides, it should be pointed out that $\{\lambda(t)\}$ is by no means necessary chosen as a constant, we provide a generalized DFOP (short as G-DFOP) for a dynamic discount factor sequence $\{\lambda(t)\}$, corresponding to target in \eqref{dynamic-discount-targ}, which is also provably a one-pass algorithm. Detailed proofs are provided in Section 1 of supplementary material. 

For the classification scenario, $y(t)$ is no longer a real-valued output but a discrete value, and we assume $y(t)\in \{-1,+1\}$ for convenience.  A slight modification on original output step is applied in classification, where the effectiveness is empirically validated in the next section.

Assuming that the feature is $d$-dimension, we only need to keep $P(t)\in \mathbb{R}^{d\times d}$ in memory during the algorithm processing procedure. In other words, the storage is always $O(d^2)$, which is independent to the number of training examples. Besides, at the $t$-th time stamp, the update of $\hat{\mathbf{w}}(t)$ is unrelated to the previous data items, namely every data item can be discarded once it has been scanned.

\subsection{Theoretical Guarantee}
\label{section-theory}
In this section, we develop an estimate error bound in a non-stationary regression scenario. 

Consider the additive model of drift in sequence $\{\mathbf{w}(t)\}$,
\begin{equation}
\label{concept-drift}
\mathbf{w}(t) = \mathbf{w}(t-1) + \mathbf{s}(t), t\geq 1
\end{equation}
We assume that the adding term $\{\mathbf{s}(t)\}$ is an $E$-valued martingale-difference $d$-dimension sub-Gaussian vector sequence, with corresponding variance proxy sequence $\{\sigma_t\}$, whose formal definition will be given following. The $E$-valued martingale-difference assumption is reasonable, in fact, in many real-world application, the drift of concepts are usually independent.

Similar to the analysis in \cite{journals/control/guo1993performance}, we relax the assumptions to be more realistic in real-world applications and provide a non-deterministic estimate error bound $\lVert \mathbf{w}(t) - \hat{\mathbf{w}}(t) \rVert$ based on vector concentration, showing that the estimate error is tending to convergence with high probability.

Now we give the formal definitions of sub-Gaussian random variable and random vector.
\begin{defn} (sub-Gaussian random variable)
A random variable $X\in \mathbb{R}$ is said to be \textit{sub-Gaussian} with variance proxy $\sigma^2$ if $\mathbb{E}[X]=0$ and its moment generating function satisfies
\begin{equation}
\label{def:sub-gaussian}
\mathbb{E}[\exp(s X)] \leq \exp(\frac{s^2 \sigma^2}{2}),\quad	\forall s\in \mathbb{R}
\end{equation}
\end{defn}

\begin{defn} (sub-Gaussian random vector)
A random vector $\mathbf{x} \in \mathbb{R}^d = (x_1,\cdots,x_d)$ is called \textit{sub-Gaussian} with variance proxy $\sigma^2$ if all its coordinates are sub-Gaussian random variables with variance proxy $\sigma^2$.
\end{defn}

To exploit concentration property of sub-Gaussian random vector, condition ($\mathcal{C}_\alpha[\sigma^\infty]$) proposed in Theorem 2.1 of \cite{journal/juditsky2008large} shall be satisfied. Thus, first, we show that there exists a \textit{bounding sequence} $\{\gamma_t\}$ for a sub-Gaussian random vector sequence $\{\mathbf{x}(t)\}$.

\begin{lemma}
\label{theorem:sub-Gaussian}
For a sub-Gaussian random vector sequence $\{\mathbf{x}(t)\}$ with a variance proxy sequence $\{\sigma_t\}$, there exists a corresponding positive bounding sequence $\{\gamma_t\}$, such that 
\begin{equation}
\forall t\geq 1: \mathbb{E} \left\{ \exp\{\lVert \mathbf{x}(t)\rVert^2/\gamma_t^2\}\right\}\leq \exp\{1\}
\end{equation} 
\end{lemma}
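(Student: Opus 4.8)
The plan is to exhibit an explicit bounding sequence that scales like $\sigma_t\sqrt{d}$, and the only scalar input needed is that a centered sub-Gaussian variable has a finite exponential-of-square moment. First I would pass from the moment generating function bound $\mathbb{E}[\exp(sX)]\le\exp(s^2\sigma^2/2)$ to the two-sided tail bound $\mathbb{P}(|X|>r)\le 2\exp(-r^2/(2\sigma^2))$ by the standard Chernoff argument, optimizing over $s$ (and applying it to both $X$ and $-X$). Then, writing $\mathbb{E}[\exp(\lambda X^2)]=1+\int_0^\infty \lambda e^{\lambda u}\,\mathbb{P}(X^2>u)\,du$ and substituting the tail bound, a one-line integration gives, for every $\lambda<1/(2\sigma^2)$, the estimate $\mathbb{E}[\exp(\lambda X^2)]\le (1+2\lambda\sigma^2)/(1-2\lambda\sigma^2)$. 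Applied coordinatewise with $\sigma=\sigma_t$, this is the sole scalar ingredient.

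The key structural obstacle is that the definition of a sub-Gaussian random vector constrains each coordinate $x_i(t)$ only marginally; the coordinates are \emph{not} assumed independent, so the joint moment generating function of $\lVert \mathbf{x}(t)\rVert^2=\sum_{i=1}^d x_i(t)^2$ does not factorize. I would circumvent this with convexity rather than independence. Writing the exponent as $\lVert \mathbf{x}(t)\rVert^2/\gamma_t^2 = (d/\gamma_t^2)\cdot\frac{1}{d}\sum_{i=1}^d x_i(t)^2$ and applying Jensen's inequality to the convex map $z\mapsto\exp((d/\gamma_t^2)z)$ against the empirical average over coordinates yields
\[
\exp\!\Big(\tfrac{1}{\gamma_t^2}\lVert \mathbf{x}(t)\rVert^2\Big)\le \frac{1}{d}\sum_{i=1}^d \exp\!\Big(\tfrac{d}{\gamma_t^2}\,x_i(t)^2\Big),
\]
so that after taking expectations the problem decouples into $d$ scalar square-exponential moments, each controllable by the previous step with $\lambda=d/\gamma_t^2$. (H\"older's inequality with exponent $d$ in each factor would serve equally well.)

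Combining the two pieces, provided $\gamma_t^2>2d\sigma_t^2$, each term obeys the scalar bound and the average gives $\mathbb{E}[\exp(\lVert \mathbf{x}(t)\rVert^2/\gamma_t^2)]\le (1+u)/(1-u)$ with $u\triangleq 2d\sigma_t^2/\gamma_t^2$. The final step is a calibration: I would solve $(1+u)/(1-u)\le e$, which reduces to the linear condition $u\le (e-1)/(e+1)$, and hence choose any $\gamma_t$ with $\gamma_t^2\ge 2d\sigma_t^2(e+1)/(e-1)$, for instance $\gamma_t=\sigma_t\sqrt{2d(e+1)/(e-1)}$. This exhibits the desired positive bounding sequence and closes the argument. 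I expect the only delicate point to be the non-independence of coordinates, resolved above by Jensen; the remaining estimates are routine, and the resulting scaling $\gamma_t\propto\sigma_t\sqrt{d}$ is precisely the form required to verify condition $(\mathcal{C}_\alpha[\sigma^\infty])$ in the cited concentration theorem downstream.
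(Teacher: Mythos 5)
Your proposal is correct, and it takes a genuinely different route from the paper's. The paper proves the lemma in two cited/auxiliary steps: it first invokes a result of Buldygin--Kozachenko type to bound $\mathbb{E}\exp(\lVert \mathbf{x}(t)\rVert)$ and declare the norm a ``generalized sub-Gaussian'' scalar, then applies a separate moment-expansion lemma showing $X^2-\mathbb{E}[X^2]$ is sub-Exponential, and finally asserts that ``a sufficiently small positive constant $s$'' can be chosen as $1/\gamma_t^2$ without exhibiting it. You instead work entirely coordinatewise: Chernoff gives the two-sided tail $\Pr(|X|>r)\le 2e^{-r^2/(2\sigma^2)}$, the layer-cake identity turns that into the explicit scalar estimate $\mathbb{E}[e^{\lambda X^2}]\le (1+2\lambda\sigma^2)/(1-2\lambda\sigma^2)$ for $\lambda<1/(2\sigma^2)$, and Jensen applied to $z\mapsto\exp((d/\gamma_t^2)z)$ against the average $\frac{1}{d}\sum_i x_i(t)^2$ decouples the coordinates without any independence assumption --- which is exactly the point the vector definition leaves open. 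The calibration $u\le(e-1)/(e+1)$ is correct and yields the explicit choice $\gamma_t=\sigma_t\sqrt{2d(e+1)/(e-1)}$. What your approach buys is substantial: it is self-contained (no external theorem on norms of sub-Gaussian vectors, no sub-Exponential equivalence lemma), and it makes the bounding sequence explicit with scaling $\gamma_t\propto\sigma_t\sqrt{d}$, whereas the paper's Lemma on $\mathbb{E}\exp(\lVert\mathbf{x}\rVert)\le 2\exp(d^2\sigma^2/2)$ implicitly carries a worse dimension dependence and, moreover, only controls the moment generating function of the norm at $s=1$ rather than for all $s$, which is what its subsequent sub-Exponential step formally requires. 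Since $\gamma^*$ propagates into the final estimate error bound of Theorem 2, the explicit $\sqrt{d}$ scaling is a concrete improvement in the statement's downstream use, not just in its proof.
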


Lemma \ref{theorem:sub-Gaussian} guarantees the "light tail" assumption of sub-Gaussian random vector. Then we could apply the following vector concentration, which is a corollary of Theorem 2.1 proposed in \cite{journal/juditsky2008large}.

\begin{thm}
\label{thm:concentration}
(Corollary of Theorem 2.1 in \cite{journal/juditsky2008large}) In an Euclidean space $(\mathbb{R}^n,\lVert \cdot \rVert_2)$, let E-valued martingale-difference sub-Gaussian sequence $\xi^\infty$ with a corresponding bounding sequence $\sigma^N = [\sigma_1;\cdots; \sigma_N]$. Let $S_N = \sum_{i=1}^N \xi_i$, then for all $N \geq 1$ and $\gamma \geq 0$:
\begin{eqnarray}
\label{concentration-condition}
\Pr \left\{ \lVert S_N \rVert \geq  \sqrt{2} (1+ \gamma)\sqrt{\sum_{i=1}^N \sigma_i^2} \right\} \leq  \exp\left\{ -\gamma^2/3\right\},
\end{eqnarray}
\end{thm}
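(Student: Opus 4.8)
The statement is a specialization of Theorem~2.1 of \cite{journal/juditsky2008large} to the Hilbert-space setting, so the plan is not to reprove the large-deviation machinery from scratch but to verify that $(\mathbb{R}^n,\lVert\cdot\rVert_2)$ meets the two structural hypotheses of that theorem and then to track the constants through its general bound until they collapse to the stated factors $\sqrt{2}(1+\gamma)$ and $\exp\{-\gamma^2/3\}$. The two hypotheses are (i) that the ambient space is $2$-smooth, and (ii) that the increment sequence $\xi^\infty$ satisfies the tail condition $(\mathcal{C}_\alpha[\sigma^\infty])$ with $\alpha=2$.

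First I would settle the geometry. A normed space is $(2,D)$-smooth when $\lVert u+v\rVert^2 + \lVert u-v\rVert^2 \leq 2\lVert u\rVert^2 + 2D^2\lVert v\rVert^2$ for all $u,v$. Since $(\mathbb{R}^n,\lVert\cdot\rVert_2)$ is a Hilbert space, the parallelogram identity turns this into an equality with $D=1$, so the Euclidean space is $2$-smooth with the smallest possible smoothness constant. This is precisely the feature that drives the generic constants of Theorem~2.1 down to their cleanest form.

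Next I would check the tail condition. Theorem~2.1 requires a conditional light-tail estimate of the shape $\mathbb{E}\{\exp\{\lVert \xi_t\rVert^2/\sigma_t^2\}\mid \mathcal{F}_{t-1}\}\leq \exp\{1\}$, i.e. condition $(\mathcal{C}_\alpha[\sigma^\infty])$ for $\alpha=2$. Lemma~\ref{theorem:sub-Gaussian} supplies exactly this bound for a sub-Gaussian vector, with its bounding sequence playing the role of $\{\sigma_t\}$; the martingale-difference hypothesis is what lets the estimate be taken conditionally on $\mathcal{F}_{t-1}$ rather than only marginally. Hence $\xi^\infty$ satisfies $(\mathcal{C}_2[\sigma^\infty])$ and Theorem~2.1 applies.

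With $D=1$ and $\alpha=2$ in hand, the final step is to substitute these values into the general inequality of \cite{journal/juditsky2008large} and simplify. I expect the main obstacle to lie purely in the bookkeeping of constants: one must confirm that the generic bound, stated for arbitrary $\alpha$ and $D$, specializes to precisely $\sqrt{2}(1+\gamma)$ on the deviation scale and to $-\gamma^2/3$ in the exponent, which requires carefully matching the reference's normalization conventions (in particular whether the proxy enters as $\sigma_t$ or $\sigma_t^2$, and how $\mathcal{C}_\alpha$ is normalized). A minor secondary point is uniformity in $N$: this is inherited from the supermartingale underlying Theorem~2.1, so no separate $N$-dependent estimate is needed and the inequality holds for every $N\geq 1$ at once.
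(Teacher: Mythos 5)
Your proposal follows essentially the same route as the paper's own proof: verify the light-tail condition $(\mathcal{C}_\alpha[\sigma^\infty])$ via Lemma~\ref{theorem:sub-Gaussian}, note that the Euclidean space is smooth and regular with the best possible constant (the paper phrases this as ``1-smooth and 1-regular, $\kappa=1$''; you phrase it via the parallelogram identity), and then invoke Theorem~2.1 of the cited reference. Your version is in fact slightly more careful than the paper's, which simply asserts that the result ``follows immediately'' without tracking how the constants $\sqrt{2}(1+\gamma)$ and $\exp\{-\gamma^2/3\}$ emerge from the general bound.
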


Based on Theorem \ref{concentration-condition}, we could provide Lemma \ref{lemma:estimate-1} and Lemma \ref{lemma:estimate-2} to bound a sum of sub-Gaussian random vectors and random  variables with exponential decrease, respectively. 
\begin{lemma}
\label{lemma:estimate-1}
Let $\{\mathbf{x}(t)\}$ be an $E$-valued martingale-difference $d$-dimension sub-Gaussian random vector sequence, with corresponding bounding sequence $\{\gamma_t\}$, and $Z(t)\in \mathbb{R}^{d\times d}, Y(t) = (1-\mu)Y(t-1) + \mu Z(t), t\geq 1$. Then for $\mu \in (0,1)$, with a probability at least $1-\frac{\delta}{2}$, we have 
\begin{equation*}
\small
\left\lVert  \sum_{k=1}^t (1-\mu)^{t-k}Y(k)\mathbf{x}(k)\right\rVert \leq \sqrt{2}(1+\sqrt{3 \ln (2t/\delta)})\cdot \gamma^*(\|Y(0)\| +  Z^*) \mu^{-\frac{1}{2}}
\end{equation*}
where $Z^* = \sup_{k=1,\cdots,t}\lVert Z(k)\rVert$ and $\gamma^* = \sup_{k=1,\cdots,t}\gamma_k$.
\end{lemma}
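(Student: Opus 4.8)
The plan is to view the sum $S_t \triangleq \sum_{k=1}^t (1-\mu)^{t-k} Y(k)\mathbf{x}(k)$ as a partial sum of a martingale-difference sub-Gaussian vector sequence and to apply the concentration inequality of Theorem~\ref{thm:concentration}; the only preparatory work is to control the weight matrices $\{Y(k)\}$ uniformly and to read off an appropriate bounding sequence for the weighted summands. First I would bound $\|Y(k)\|$ for every $k$: unrolling the recursion $Y(t)=(1-\mu)Y(t-1)+\mu Z(t)$ gives $Y(k)=(1-\mu)^k Y(0)+\mu\sum_{j=1}^k (1-\mu)^{k-j}Z(j)$, so by the triangle inequality together with $\sum_{m=0}^{k-1}(1-\mu)^m \le \mu^{-1}$ and $(1-\mu)^k\le 1$ one obtains the uniform estimate $\|Y(k)\| \le \|Y(0)\| + Z^*$.

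Next I would set $\xi_k \triangleq (1-\mu)^{t-k}Y(k)\mathbf{x}(k)$. Since $\{\mathbf{x}(k)\}$ is a martingale difference and each $Y(k)$ is determined by $Z(1),\dots,Z(k)$ (hence predictable with respect to the driving filtration), the sequence $\{\xi_k\}$ is again martingale-difference. Using $\|\xi_k\| \le (1-\mu)^{t-k}\|Y(k)\|\,\|\mathbf{x}(k)\|$ and the light-tail estimate of Lemma~\ref{theorem:sub-Gaussian}, the choice $\sigma_k = (1-\mu)^{t-k}\|Y(k)\|\gamma_k$ serves as a bounding sequence for $\xi_k$, because then $\mathbb{E}[\exp(\|\xi_k\|^2/\sigma_k^2)] \le \mathbb{E}[\exp(\|\mathbf{x}(k)\|^2/\gamma_k^2)] \le \exp\{1\}$.

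Combining the uniform bound of the first step with $\gamma_k \le \gamma^*$ and summing the geometric series $\sum_{k=1}^t (1-\mu)^{2(t-k)} = \frac{1-(1-\mu)^{2t}}{\mu(2-\mu)} \le \mu^{-1}$ yields $\sqrt{\sum_{k=1}^t \sigma_k^2} \le \gamma^*(\|Y(0)\|+Z^*)\mu^{-1/2}$, which is exactly the deterministic factor on the right-hand side. I would then apply Theorem~\ref{thm:concentration} to $S_t=\sum_{k=1}^t \xi_k$ with $N=t$ and $\gamma=\sqrt{3\ln(2t/\delta)}$, so that the failure probability is $\exp(-\gamma^2/3)=\delta/(2t)\le \delta/2$; on the complementary event the theorem gives $\|S_t\| \le \sqrt{2}(1+\gamma)\sqrt{\sum_{k=1}^t \sigma_k^2}$, which is the claimed bound after substituting the estimates above.

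I expect the main obstacle to be the second step rather than the geometric-sum bookkeeping: one must justify that the matrix-weighted products $\xi_k$ genuinely inherit the martingale-difference property (which requires $Y(k)$, equivalently $Z(k)$, to be predictable so that $\mathbb{E}[\xi_k\mid\mathcal{F}_{k-1}]=Y(k)\,\mathbb{E}[\mathbf{x}(k)\mid\mathcal{F}_{k-1}]=0$), and that the scalar light-tail bound of Lemma~\ref{theorem:sub-Gaussian} transfers to a legitimate bounding sequence for the vectors $\xi_k$ in the sense required by the condition $(\mathcal{C}_\alpha[\sigma^\infty])$ underlying Theorem~\ref{thm:concentration}. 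Once this structural point is secured, the remaining steps are routine norm and geometric-series estimates.
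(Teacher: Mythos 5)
Your argument is correct, but it takes a genuinely different route from the paper. The paper's proof first applies the summation-by-parts identity (Lemma~\ref{lemma:summation} of the supplementary material) to rewrite $\sum_{k=1}^t(1-\mu)^{t-k}Y(k)\mathbf{x}(k)$ as $(1-\mu)^tY(0)S(t,1)+\sum_{k}\mu(1-\mu)^{k-1}Z(t-k+1)S(t,t-k+1)$ with $S(t,k)=\sum_{i=k}^t\mathbf{x}(i)$, then invokes Theorem~\ref{thm:concentration} separately for each of the $t$ unweighted partial sums $S(t,k)$ at level $\delta/(2t)$ and takes a union bound, finishing with the elementary estimates $\sum_k\mu(1-\mu)^{k-1}\sqrt{k}\le\mu^{-1/2}$ and $(1-\mu)^t\sqrt{t}\le\mu^{-1/2}$. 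You instead apply the concentration inequality once, directly to the weighted summands $\xi_k=(1-\mu)^{t-k}Y(k)\mathbf{x}(k)$, after establishing the uniform envelope $\|Y(k)\|\le\|Y(0)\|+Z^*$ and summing the geometric series $\sum_k(1-\mu)^{2(t-k)}\le\mu^{-1}$; this avoids both the Abel transformation and the union bound, and in fact would let you replace $\ln(2t/\delta)$ by $\ln(2/\delta)$ (you only insert the factor $t$ to match the stated bound). The price is exactly the structural point you flag: you must check that $\{\xi_k\}$ is itself a legitimate martingale-difference sequence satisfying condition $(\mathcal{C}_\alpha[\sigma^\infty])$ with a \emph{deterministic} bounding sequence. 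Since the lemma treats $Z(t)$ (hence $Y(t)$) as fixed matrices this is unproblematic, though you should define $\sigma_k=(1-\mu)^{t-k}(\|Y(0)\|+Z^*)\gamma_k$ from the outset rather than $\sigma_k=(1-\mu)^{t-k}\|Y(k)\|\gamma_k$, so that the bounding sequence is manifestly non-random; also note that $Y(k)$ determined by $Z(1),\dots,Z(k)$ is adapted rather than predictable if the $Z(k)$ were random, so the deterministic reading of the hypothesis is what saves the step. The paper's route sidesteps this issue entirely because concentration is only ever applied to partial sums of $\{\mathbf{x}(i)\}$ itself, which is why it needs the union bound; your route is shorter and slightly sharper under the stated hypotheses.
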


\begin{lemma}
\label{lemma:estimate-2}
Let $\{\epsilon(t)\}$ be an independent (or $E$-valued martingale-difference) sub-Gaussian random variable sequence, with corresponding bounding sequence (i.e., variance proxy sequence) $\{\sigma_t\}$, and $\mathbf x(t)\in \mathbb{R}^{d}, t\geq 1$. Then for $\mu \in (0,1)$, with a probability at least $1-\frac{\delta}{2}$, we have 
\begin{equation*}
\small
\left\|  \sum_{k=1}^t (1-\mu)^{t-k}\mathbf{x}(k)\epsilon(k)\right\| \leq 2\sqrt{2}\left(1+\sqrt{3 \ln\frac{2t}{\delta}}\right)\sigma^* \mu^{-\frac{1}{2}}
\end{equation*}
where $\sigma^* = \sup_{k=1,\cdots,t}\lVert \mathbf{x}(k)\rVert \cdot \sup_{k=1,\cdots,t}\sigma_k $.
\end{lemma}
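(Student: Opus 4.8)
The plan is to cast the weighted sum as a single application of the martingale vector concentration in Theorem \ref{thm:concentration}, exactly mirroring the route for Lemma \ref{lemma:estimate-1}. Define $\xi_k \triangleq (1-\mu)^{t-k}\mathbf{x}(k)\epsilon(k)$ for $k=1,\dots,t$, so that the quantity to be controlled is precisely the partial sum $S_t=\sum_{k=1}^{t}\xi_k$. Since $\mathbf{x}(k)$ is (deterministic, hence) measurable with respect to the past and $\epsilon(k)$ is a mean-zero martingale-difference sub-Gaussian scalar, each $\xi_k$ is a martingale-difference sub-Gaussian random vector; the deterministic weight $(1-\mu)^{t-k}$ does not disturb the martingale property. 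Thus Theorem \ref{thm:concentration} becomes applicable once a valid bounding sequence for $\{\xi_k\}$ is produced.

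The key --- and only non-mechanical --- step is to pin down that bounding sequence and, with it, the correct constant. I would first record the scalar fact that a sub-Gaussian $\epsilon(k)$ with variance proxy $\sigma_k^2$ obeys the light-tail bound $\mathbb{E}[\exp(\epsilon(k)^2/(2\sigma_k)^2)]\le e$, which follows from the standard estimate $\mathbb{E}[\exp(\lambda\epsilon(k)^2)]\le(1-2\lambda\sigma_k^2)^{-1/2}$ evaluated at $\lambda=1/(4\sigma_k^2)$, giving $\sqrt{2}\le e$. Hence $\epsilon(k)$ carries bounding parameter $2\sigma_k$, and this extra factor of $2$ is exactly what upgrades the $\sqrt{2}$ of Lemma \ref{lemma:estimate-1} to the $2\sqrt{2}$ claimed here. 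Multiplying by the fixed vector $\mathbf{x}(k)$ scales the norm, so $\xi_k$ admits the bounding parameter $\tilde\gamma_k=2(1-\mu)^{t-k}\lVert\mathbf{x}(k)\rVert\,\sigma_k$, the form that matches condition $(\mathcal C_\alpha[\sigma^\infty])$ required by \cite{journal/juditsky2008large}.

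Feeding $\{\tilde\gamma_k\}$ into Theorem \ref{thm:concentration} and using $\lVert\mathbf{x}(k)\rVert\sigma_k\le\sigma^*$ together with the geometric-series bound $\sum_{k=1}^{t}(1-\mu)^{2(t-k)}=\frac{1-(1-\mu)^{2t}}{\mu(2-\mu)}\le\mu^{-1}$ yields $\sqrt{\sum_k\tilde\gamma_k^{2}}\le 2\sigma^*\mu^{-1/2}$, and therefore $\Pr\{\lVert S_t\rVert\ge 2\sqrt{2}(1+\gamma)\sigma^*\mu^{-1/2}\}\le\exp(-\gamma^2/3)$. Setting the right-hand side equal to the target failure level and solving for $\gamma$ produces the $\sqrt{3\ln(\cdot)}$ term; choosing the level as $\delta/(2t)$ --- which still delivers confidence at least $1-\delta/2$ and anticipates a union bound over the horizon in the downstream error analysis --- gives $\gamma=\sqrt{3\ln(2t/\delta)}$ and closes the estimate.

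I expect the main obstacle to be the constant-tracking in the second step: correctly converting the scalar variance proxy $\sigma_k$ into the light-tail bounding parameter (the factor $2$) and verifying that the product $\mathbf{x}(k)\epsilon(k)$ inherits it with the $\lVert\mathbf{x}(k)\rVert$ scaling. The geometric summation and the tail inversion are routine; the one point I would double-check is whether the logarithm should read $2/\delta$ (a single direct application of Theorem \ref{thm:concentration}) or $2t/\delta$ (folding in a union bound), since the two differ only in how conservatively the failure probability is allocated.
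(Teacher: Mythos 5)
Your proof is correct, but it takes a genuinely different route from the paper's. The paper does not apply the concentration inequality to the weighted summands at all: it first verifies the light-tail condition for the product sequence $\{\mathbf{x}(k)\epsilon(k)\}$ with bounding parameters $\sigma_i' = (\sup_k\lVert\mathbf{x}(k)\rVert)\,\sigma_i$, and then invokes the entire machinery of Lemma~\ref{lemma:estimate-1} with $Y(t)\equiv Z(t)\equiv I$; that machinery is an Abel-summation (summation-by-parts) decomposition of $\sum_k(1-\mu)^{t-k}\xi_k$ into \emph{unweighted} tail sums $\sum_{i=k}^t\mathbf{x}(i)\epsilon(i)$, each controlled by Theorem~\ref{thm:concentration} at level $\delta/(2t)$, glued together by a union bound over the $t$ partial sums and the estimate $\sum_{k=1}^t\mu(1-\mu)^{k-1}\sqrt{k}\le\mu^{-1/2}$. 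In that accounting the factor $2$ in $2\sqrt{2}$ arises from $\lVert Y(0)\rVert+Z^* = 1+1$, whereas in yours it arises from the variance-proxy-to-light-tail conversion $\mathbb{E}[\exp(\epsilon(k)^2/(2\sigma_k)^2)]\le\sqrt{2}\le e$. Your single direct application of Theorem~\ref{thm:concentration} to $\xi_k=(1-\mu)^{t-k}\mathbf{x}(k)\epsilon(k)$, with the weights absorbed into the bounding sequence and the geometric bound $\sum_k(1-\mu)^{2(t-k)}\le\mu^{-1}$, buys two things: it removes the union bound entirely (so the logarithm could in fact be sharpened to $\ln(2/\delta)$; keeping $\ln(2t/\delta)$ only loosens the bound and matches the stated form), and it is more careful about the light-tail constant than the paper, which uses $\sigma_i$ itself as the bounding parameter even though $\mathbb{E}\exp(\epsilon^2/\sigma^2)\le e$ does not follow from the variance-proxy definition without the extra factor. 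The one hypothesis your argument leans on --- that $\mathbf{x}(k)$ is deterministic or predictable so that $\xi_k$ remains a martingale difference with the scaled bounding parameter --- is the same one the paper implicitly uses when it conditions on $\lVert\mathbf{x}(k)\rVert$ in its light-tail verification, so nothing is lost there.
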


\begin{thm}
\label{thereom:error bound}
Assume following conditions be satisfied:\vspace{-3mm}
\begin{itemize}
\item[(\expandafter{\romannumeral1})] drift term $\{ \mathbf{s}(t)\}$ is an $E$-valued martingale-difference sub-Gaussian random vector sequence, with corresponding bounding sequence $\{\gamma_t\}$; \vspace{-1mm}
\item[(\expandafter{\romannumeral2})] output noise $\{\epsilon(t)\}$ is an independent (or $E$-valued martingale-difference) sub-Gaussian random variable sequence, with corresponding bounding sequence (i.e., variance proxy sequence) $\{\sigma_t\}$.\vspace{-1mm}
\end{itemize}
Then with a probability at least $1-\delta$, we have 
\begin{equation*}
\label{error-bound}
\begin{split}
\| \mathbf{w}(t)-\hat{\mathbf{w}}(t)\rVert &\leq K \left\{ \right.(1-\mu)^t\lVert R(0) \rVert \lVert \tilde{\mathbf{w}}(0) \rVert + \sqrt{2} (1+\sqrt{3 \ln (2t/\delta)}) \\
&\quad \left.\cdot [2\sigma^*\mu^{1/2} + \gamma^*(\|R(0)\|+{x^*}^2) \mu^{-1/2}]\right\}\\
\end{split}
\end{equation*}
where $K = \sup_{k=1,\cdots,t}\lVert P(k)\rVert$, $x^* = \sup_{k=1,\cdots,t}\lVert \mathbf{x}(k)\rVert$, $\sigma^* = \sup_{k=1,\cdots,t} \lVert \mathbf{x}(k)\rVert \cdot \sup_{k=1,\cdots,t}\sigma_k$ and $\gamma^* = \sup_{k=1,\cdots,t}\gamma_k$.
\end{thm}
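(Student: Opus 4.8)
The plan is to track the estimation error $\tilde{\mathbf{w}}(t) \triangleq \mathbf{w}(t)-\hat{\mathbf{w}}(t)$ through a linear recursion, unroll it into an initial-condition term and two stochastic sums, and then control those sums with Lemma~\ref{lemma:estimate-1} and Lemma~\ref{lemma:estimate-2}. First I would substitute the observation model $y(t)=\mathbf{x}(t)^{\mathrm{T}}\mathbf{w}(t-1)+\epsilon(t)$ into the innovation appearing in the update of Algorithm~\ref{alg:DFOP}, rewriting $y(t)-\hat{\mathbf{w}}(t-1)^{\mathrm{T}}\mathbf{x}(t)$ as $\mathbf{x}(t)^{\mathrm{T}}\tilde{\mathbf{w}}(t-1)+\epsilon(t)$. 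Feeding this together with the drift model $\mathbf{w}(t)=\mathbf{w}(t-1)+\mathbf{s}(t)$ into the definition of $\tilde{\mathbf{w}}(t)$ yields the error recursion $\tilde{\mathbf{w}}(t)=A(t)\tilde{\mathbf{w}}(t-1)+\mathbf{s}(t)-\mu P(t)\mathbf{x}(t)\epsilon(t)$, where the transition matrix is $A(t)=I-\mu P(t)\mathbf{x}(t)\mathbf{x}(t)^{\mathrm{T}}$.

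The crux is to convert the product of transition matrices into a clean exponential weight. I would identify $P(t)=R(t)^{-1}$ with $R(t)=(1-\mu)R(t-1)+\mu\mathbf{x}(t)\mathbf{x}(t)^{\mathrm{T}}$ (the form behind the matrix-inversion-lemma step in the algorithm) and then verify the telescoping identity $R(t)A(t)=R(t)-\mu\mathbf{x}(t)\mathbf{x}(t)^{\mathrm{T}}=(1-\mu)R(t-1)$, equivalently $A(t)=(1-\mu)R(t)^{-1}R(t-1)$. Chaining this over $j=k+1,\dots,t$ cancels the interior factors and gives $\prod_{j=k+1}^{t}A(j)=(1-\mu)^{t-k}P(t)R(k)$. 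Unrolling the recursion back to $\tilde{\mathbf{w}}(0)$ and pulling $P(t)$ to the left then produces the decomposition
\begin{equation*}
\begin{split}
\tilde{\mathbf{w}}(t)={}&(1-\mu)^{t}P(t)R(0)\tilde{\mathbf{w}}(0)+P(t)\sum_{k=1}^{t}(1-\mu)^{t-k}R(k)\mathbf{s}(k)\\
&{}-\mu P(t)\sum_{k=1}^{t}(1-\mu)^{t-k}\mathbf{x}(k)\epsilon(k),
\end{split}
\end{equation*}
an initial-condition term together with a drift sum and a noise sum.

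Taking norms and using $\|P(t)\|\le K$, the initial term is deterministically at most $K(1-\mu)^{t}\|R(0)\|\,\|\tilde{\mathbf{w}}(0)\|$. For the drift sum I would invoke Lemma~\ref{lemma:estimate-1} with $\mathbf{s}(k)$ playing the role of the sub-Gaussian vector and with $Y(k)=R(k)$, $Z(k)=\mathbf{x}(k)\mathbf{x}(k)^{\mathrm{T}}$, since $R$ obeys precisely $Y(k)=(1-\mu)Y(k-1)+\mu Z(k)$ with $Y(0)=R(0)$ and $Z^{*}={x^{*}}^{2}$; this bounds it, with probability at least $1-\delta/2$, by $\sqrt{2}(1+\sqrt{3\ln(2t/\delta)})\gamma^{*}(\|R(0)\|+{x^{*}}^{2})\mu^{-1/2}$. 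For the noise sum I would apply Lemma~\ref{lemma:estimate-2} directly; after absorbing the outer factor $\mu$ this contributes the $2\sigma^{*}\mu^{1/2}$ term, again with probability at least $1-\delta/2$. A union bound over these two events gives the stated inequality with probability at least $1-\delta$. The hard part is the telescoping identity $\prod_{j=k+1}^{t}A(j)=(1-\mu)^{t-k}P(t)R(k)$: once the matrix products collapse to scalar weights $(1-\mu)^{t-k}$ multiplying the sub-Gaussian summands, everything downstream is a matter of matching the two sums to the hypotheses of Lemma~\ref{lemma:estimate-1} and Lemma~\ref{lemma:estimate-2}.
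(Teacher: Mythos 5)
Your proposal is correct and follows essentially the same route as the paper's proof: the same error recursion, the same decomposition into an initial-condition term plus a drift sum and a noise sum with weights $(1-\mu)^{t-k}$, and the same invocations of Lemma~\ref{lemma:estimate-1} (with $Y(k)=R(k)$, $Z(k)=\mathbf{x}(k)\mathbf{x}(k)^{\mathrm{T}}$) and Lemma~\ref{lemma:estimate-2}, combined by a union bound. The only cosmetic difference is that you unroll $\tilde{\mathbf{w}}(t)$ directly via the telescoping product $\prod_{j=k+1}^{t}A(j)=(1-\mu)^{t-k}P(t)R(k)$, whereas the paper multiplies by $R(t)$ first, unrolls the linear recurrence for $R(t)\tilde{\mathbf{w}}(t)$, and then converts back through $\|\tilde{\mathbf{w}}(t)\|\leq\|P(t)\|\,\|R(t)\tilde{\mathbf{w}}(t)\|\leq K\|R(t)\tilde{\mathbf{w}}(t)\|$ --- algebraically the same argument.
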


\textbf{Remark.} The estimate error bound can be decomposed into three parts, i.e., the first one is $(1-\mu)^t\lVert R(0) \rVert \lVert \bm{\tilde{\mathbf{w}}}(0) \rVert$, second one is $2\sqrt{2} (1+\sqrt{3 \ln 2t/\delta})  \sigma^*\mu^{1/2}$ and third one is $\sqrt{2} (1+\sqrt{3 \ln 2t/\delta})(\|R(0)\|+{x^*}^2)\gamma^*\mu^{-1/2}$. Apparently, the first term is decreasing to zero as $t$ increases to infinity, second term is caused by the output noise which shall not be erased, and the third term is introduced by drift of $\mathbf{w}(t)$. Ignoring the poly-logarithmic
factors in $t$ and $d$, then, an asymptotic analysis gives the estimate error bound as,
\[
\lVert \mathbf{w}(t)-\hat{\mathbf{w}}(t)\rVert = \tilde{O}(\sqrt{\mu}+\frac{1}{\sqrt{\mu}})+o(1),\ w.h.p. 
\] 
where we use the $\tilde{O}$ notation to hide constant factors as well as poly-logarithmic factors in $t$ and $d$, and $o(1)$ will exponentially decrease to zero as $t \rightarrow \infty$.

Due to the page limits, we present the proofs of Theorem \ref{thm:concentration} and Theorem \ref{thereom:error bound} (along with Lemma \ref{theorem:sub-Gaussian}, \ref{lemma:estimate-1} and \ref{lemma:estimate-2}) in Section 2 and 3 of supplementary material, respectively.

\section{Experiments}
\label{section-4}
In this section, we examine the empirical performance of the proposed DFOP on both regression and classification scenarios. Then, we analyze the parameter sensitivity in Section \ref{subsection-5.3-parameter}. However, due to the page limits, only results on the classification scenario are provided, and the regression ones are appended in the supplementary materials.

Moreover, considering that when dealing with real-world datasets, we could not grasp the evolving distribution, specifically, the start and end time of drift, the underlying distribution. As a consequence, it would be very incomplete to analyze the behaviour of algorithms. Hence, both synthesis and real-world datasets are included in the comparison experiments. 

\subsection{Comparisons Methods}
\label{subsection-4.2}
We compare the proposed approach with six common methods on both synthesis and real-world datasets. The comparison methods are (a) RLS, least square approach solved in a recursive manner, (b) Sliding window approach, the classifier is constantly updated by the nearest data samples in the window. Base classifiers are 1NN and SVM, denoted as 1NN-win and SVM-win~\cite{conf/sdm/SouzaSGB15}, (c) SVM-fix, batch implementation of SVM with a fixed window size~\cite{conf/kdd/SyedLS99a}, (d) SVM-ada, ~batch implementation of SVM with an adaptive window size~\cite{journals/ida/Klinkenberg04}, (e) DWM, dynamic weighted majority algorithm, an adaptive ensemble based on the traditional weighted majority algorithm Winnow~\cite{conf/icdm/KolterM03,journals/jmlr/KolterM07}.

It's noteworthy to emphasize that the above comparisons are not all fair enough, because DFOP requires each data item be processed only once. Moreover, DFOP only needs one instance to update the model. Not all comparison methods can meets these two constraints, specifically, 1NN-win, SVM-win, SVM-fix and SVM-ada are window-based algorithms, hence, they are not one-pass. Besides, SVM-fix and SVM-ada are not incremental but updated in a series of batches. DWM is incremental style but not one-pass because it needs to use data to update experts pool in addition.

\subsection{Synthetic Datasets}
First, we present the performance comparisons over synthetic datasets.\vspace{-2mm}
\begin{itemize}
\item[-] \textit{SEA}~\cite{conf/kdd/StreetK01} consists of three attributes $x_1,x_2,x_3$, and $0.0\leq x_i \leq 10.0$. The target concept is $x_1+x_2 \leq b$, and there are 50,000 instances with 4 stages where $b \in \{7,8,9,9.5\}$. 
\item[-] \textit{hyperplane}~\cite{conf/kdd/Fan04}, is generated uniformly in a 10-dimensional hyperplane with 90,000 instances in total over 9 different stages.
\end{itemize}
\vspace{-2mm}
Besides, another 11 synthesis datasets for binary classification are also adopted. Detailed information are included in the supplementary materials.

The performance is measured by holdout accuracy since underlying joint distribution of synthetic datasets are known. Holdout accuracy is calculated over testing data generated according to the identical distribution as training data at each time stamp. Performance comparisons of seven approaches on SEA and hyperplane datasets are depicted in Figure~\ref{figure:Comparison}. Since the accuracy curves of SVM-ada, SVM-fix, 1NN-win and SVM-win are so unstable that they would shield all the other curves, we also present a relatively neat figure containing RLS, DWM and DFOP only.

\begin{figure}[b]
\centering
\includegraphics[width=\textwidth]{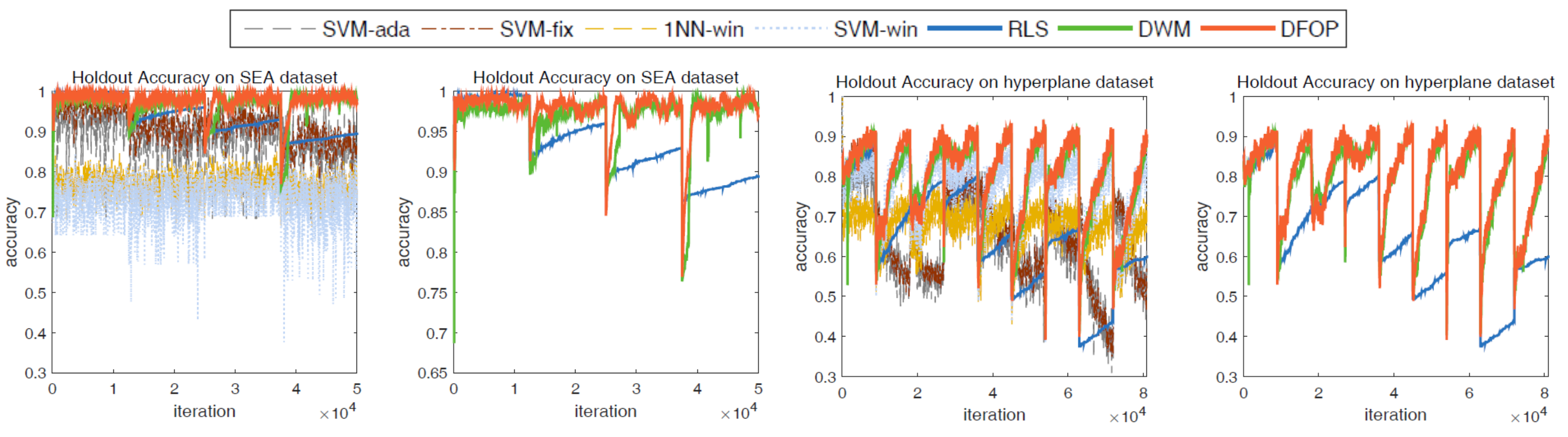}
\caption{\small{Performance comparison of seven approaches on synthetic datasets in terms of holdout accuracy. Left sides presents all the seven approaches, on the right side, only RLS, DWM and DFOP are plotted for clearness.}}
\label{figure:Comparison}
\end{figure}
As shown in Figure~\ref{figure:Comparison}, the accuracy of all algorithms falls rapidly when the underlying distribution emerges abrupt drift, and then will rise up with more data coming. DFOP is significantly better than RLS which is a special case of DFOP, this phenomenon validates the effectiveness of forgetting mechanism. Furthermore, the best two algorithms, obviously, are DFOP and DWM, both of them can converge to new stage quickly. DFOP shows a slightly better performance than DWM, both in slope and asymptote. Moreover, DWM requires to dynamically maintain a set of experts and needs previous data to update experts pool and to decide whether to remove poorly performing experts. On the contrary, DFOP demonstrates a desirable performance requiring to scan each data item only once.

\subsection{Real-world Datasets}
\begin{table*}[t]
\tiny 
\centering
\caption{\small{Performance comparison in terms of mean and standard deviation of accuracy (both in percents). Bold values indicates the best performance. Besides, $\bullet$ ($\circ$) indicates that DFOP is significantly better (worse) than the compared method (paired $t$-tests at 95\% significance level). And Win/ Tie/ Loss are summarized in the last row.}}\vspace{2mm}\label{table:Classification}
\begin{tabular}{l|llllll|l}\hline
Dataset   	 &\multicolumn{1}{c}{SVM-win}          & \multicolumn{1}{c}{1NN-win}         & \multicolumn{1}{c}{SVM-fix}          & \multicolumn{1}{c}{SVM-ada}          & \multicolumn{1}{c}{DWM }             & \multicolumn{1}{c}{RLS}              & \multicolumn{1}{c}{DFOP}           \\ \hline
SEA       	   & 73.94 $\pm$ 0.12$\bullet$ & 77.27 $\pm$ 0.04$\bullet$ 	& 86.19 $\pm$ 0.06$\bullet$ & 83.47 $\pm$ 0.09$\bullet$ & 87.04 $\pm$ 0.03$\bullet$ 	& 84.54 $\pm$ 0.47$\bullet$ & \textbf{87.99 $\pm$0.05}  \\
hyperplane	   & 83.74 $\pm$ 0.03$\bullet$ & 70.66 $\pm$ 0.03$\bullet$ 	& 87.98 $\pm$ 0.03$\bullet$ & 81.94 $\pm$ 0.07$\bullet$ & 88.36 $\pm$ 0.25$\bullet$ 	& 69.67 $\pm$ 1.40$\bullet$ & \textbf{90.14 $\pm$0.05}  \\
1CDT      	  & 98.71 $\pm$ 0.05$\bullet$ & 99.96 $\pm$ 0.07\ 			& 99.77 $\pm$ 0.06$\bullet$ & 99.77 $\pm$ 0.08$\bullet$ & 99.90 $\pm$ 0.09$\bullet$ 	& 98.79 $\pm$ 1.65$\bullet$ & \textbf{99.97 $\pm$0.05}  \\
2CDT      	  & 94.86 $\pm$ 0.06$\bullet$ & 94.62 $\pm$ 0.10$\bullet$ 	& 95.19 $\pm$ 0.13$\bullet$ & 95.18 $\pm$ 0.15$\bullet$ & 90.21 $\pm$ 0.67$\bullet$ 	& 62.24 $\pm$ 0.23$\bullet$ & \textbf{96.36 $\pm$0.09}  \\
1CHT      	  & 98.75 $\pm$ 0.18$\bullet$ & 99.81 $\pm$ 0.22\  			& 99.63 $\pm$ 0.17$\bullet$ & 99.63 $\pm$ 0.18$\bullet$ & 99.69 $\pm$ 0.26$\bullet$ 	& 98.49 $\pm$ 1.61$\bullet$ & \textbf{99.84 $\pm$0.16}  \\
2CHT      	  & 87.70 $\pm$ 0.04$\bullet$ & 85.69 $\pm$ 0.05$\bullet$ 	& 89.48 $\pm$ 0.12$\bullet$ & 88.89 $\pm$ 0.13$\bullet$ & 85.92 $\pm$ 0.72$\bullet$ 	& 62.57 $\pm$ 0.23$\bullet$ & \textbf{89.91 $\pm$0.07}  \\
1CSurr    	  & 97.99 $\pm$ 0.04$\bullet$ & \textbf{98.12 $\pm$ 0.11}$\bullet$ 	& 94.24 $\pm$ 1.08\ 		& 93.56 $\pm$ 1.08\ 		& 96.31 $\pm$ 0.50$\circ$ 		& 67.82 $\pm$ 0.22$\bullet$ & 93.24 $\pm$1.44  \\
UG-2C-2D      & 94.47 $\pm$ 0.13$\bullet$ & 93.55 $\pm$ 0.16$\bullet$ 	& 95.41 $\pm$ 0.10$\bullet$ & 94.92 $\pm$ 0.12$\bullet$ & 95.59 $\pm$ 0.11 				& 67.02 $\pm$ 1.46$\bullet$ & \textbf{95.59 $\pm$0.10}  \\
UG-2C-3D      & 93.60 $\pm$ 0.73$\bullet$ & 92.83 $\pm$ 0.93$\bullet$	& 95.05 $\pm$ 0.64$\bullet$ & 94.48 $\pm$ 0.71\ 		& 95.14 $\pm$ 0.62			 	& 61.95 $\pm$ 2.60$\bullet$ & \textbf{95.37 $\pm$0.61}  \\
UG-2C-5D      & 74.82 $\pm$ 0.45$\bullet$ & 88.04 $\pm$ 0.42$\bullet$ 	& 91.74 $\pm$ 0.26$\bullet$ & 90.37 $\pm$ 0.35$\bullet$ & 92.82 $\pm$ 0.23$\circ$ 		& 81.20 $\pm$ 2.42$\bullet$ & \textbf{92.51 $\pm$0.25}  \\
MG-2C-2D      & \textbf{90.20 $\pm$ 0.07}$\circ$   & 87.84 $\pm$ 0.09$\bullet$ 	& 84.98 $\pm$ 0.06$\bullet$ & 84.22 $\pm$ 0.06$\bullet$ & 90.15 $\pm$ 0.06$\circ$ 		& 57.18 $\pm$ 3.66$\bullet$ & 85.06 $\pm$0.06  \\
G-2C-2D 	  & 95.54 $\pm$ 0.01$\bullet$ & \textbf{99.61 $\pm$ 0.00}$\circ$		& 95.41 $\pm$ 0.01$\bullet$ & 95.26 $\pm$ 0.02$\bullet$ & 95.82 $\pm$ 0.02 				& 95.84 $\pm$ 0.01 			& 95.83 $\pm$0.02  \\ \hline
Chess         & 69.67 $\pm$ 1.51$\bullet$ & \textbf{79.58 $\pm$ 0.54} \ 			& 77.73 $\pm$ 1.56$\bullet$ & 69.18 $\pm$ 3.65$\bullet$ & 73.77 $\pm$ 0.66$\bullet$ 	& 78.70 $\pm$ 0.83$\bullet$ & 79.15 $\pm$0.62  \\
Usenet-1       & 68.92 $\pm$ 1.12\  		 & 65.36 $\pm$ 1.55$\bullet$ 	& 64.18 $\pm$ 2.24$\bullet$ & 67.68 $\pm$ 1.86$\bullet$ & 64.43 $\pm$ 4.53$\bullet$ 	& 60.65 $\pm$ 0.53$\bullet$ & \textbf{69.20 $\pm$0.68}  \\
Usenet-2       & 74.44 $\pm$ 0.71$\bullet$ & 71.03 $\pm$ 0.60$\bullet$ 	& 73.99 $\pm$ 0.69$\bullet$ & 72.64 $\pm$ 0.84$\bullet$ & 73.37 $\pm$ 0.93$\bullet$ 	& 73.16 $\pm$ 0.67$\bullet$ & \textbf{75.60 $\pm$0.57}  \\
Luxembourg     & 88.57 $\pm$ 0.28$\bullet$ & 77.51 $\pm$ 0.44$\bullet$ 	& 98.25 $\pm$ 0.19$\bullet$ & 97.43 $\pm$ 0.42$\bullet$ & 92.61 $\pm$ 0.40$\bullet$ 	& 99.06 $\pm$ 0.14$\bullet$	& \textbf{99.09 $\pm$0.14}  \\
Spam           & 83.91 $\pm$ 2.20$\bullet$ & 93.43 $\pm$ 0.82$\bullet$ 	& 92.44 $\pm$ 0.80$\bullet$ & 91.01 $\pm$ 0.94$\bullet$ & 91.49 $\pm$ 1.09$\bullet$ 	& 94.46 $\pm$ 0.16$\bullet$ & \textbf{94.77 $\pm$0.26}  \\
Weather        & 68.54 $\pm$ 0.55$\bullet$ & 72.64 $\pm$ 0.25$\bullet$ 	& 67.79  $\pm$ 0.65$\bullet$ & 77.26 $\pm$ 0.33$\bullet$ & 70.86 $\pm$ 0.42$\bullet$ 	& 78.35 $\pm$ 0.18$\bullet$ & \textbf{79.23  $\pm$0.12}  \\
Powersupply    & 73.33 $\pm$ 0.25$\bullet$ & 72.42 $\pm$ 0.21$\bullet$ 	& 71.17 $\pm$ 0.15$\bullet$ & 69.39 $\pm$ 0.17$\bullet$ & 72.18 $\pm$ 0.29$\bullet$ 	& 69.67 $\pm$ 0.64$\bullet$ & \textbf{80.46 $\pm$0.04}  \\
Electricity    & 74.20 $\pm$ 0.08$\bullet$ & \textbf{85.33 $\pm$ 0.09}$\circ$ 		& 62.01 $\pm$ 0.59$\bullet$ & 58.69 $\pm$ 0.58$\bullet$ & 78.60 $\pm$ 0.41$\bullet$ 	& 74.20 $\pm$ 0.63$\bullet$ & 76.94 $\pm$0.26  \\\hline
DFOP \tiny{W/ T/ L} &\multicolumn{1}{c}{18/ 1/ 1} & \multicolumn{1}{c}{14/ 4/ 2}	&\multicolumn{1}{c}{19/ 1/ 0}	& \multicolumn{1}{c}{18/ 2/ 0}	& \multicolumn{1}{c}{14/ 3/ 3}	&\multicolumn{1}{c}{19/ 1/ 0}	&	\multicolumn{1}{c}{-}\\ \hline
\end{tabular}
\end{table*}

To valid the effectiveness of DFOP in real-world applications, performance comparisons are presented over 8 real-world datasets. Detailed descriptions are provided in Section 5 of supplementary materials. 

In real-world datasets, we can never expect to foreknow the underlying distribution at each data stamp. Thus, it's not possible to still adopt holdout accuracy as performance measurement. In Table \ref{table:Classification}, we conduct all the experiments for 10 trails and report the overall mean and standard deviation of predictive accuracy over above real-world datasets as well as other 12 synthesis datasets.

In a total of 20 datasets, the number of instance vary from 533 to at most 200,000. DFOP achieves the best among all approaches in 15 over 20 datasets. Also, in other 5 datasets, DFOP ranks the second or the third. This validates the effectiveness of DFOP, especially under an unfair comparison condition. 

Additionally, the robustness\cite{conf/kdd/VlachosDGKK02} of all these different algorithms are compared. Briefly speaking, for a particular algorithm \textit{algo}, the robustness $r_{algo}$ is defined as the proportion between its accuracy and the smallest accuracy among all compared algorithms, i.e., $r_{algo} = acc_{algo}/\min_{\alpha} acc_\alpha$. Hence, the sum of $r_{algo}$ over all datasets indicates the robustness of for algorithm $algo$. The greater the value of the sum, the better the performance. DFOP achieves the best over 20 datasets, and RLS ranks last as expected since it didn't consider the evolving distribution in datasets at all. Due to the page limits, detailed robustness comparison results could be found in Section 4.3 in supplementary materials. 

\subsection{Parameter Study}
\label{subsection-5.3-parameter}
As stated previously, how to choose an appropriate forgetting factor is an important issue since it reflects a trade-off between stability of past condition and sensitivity to future evolution. To figure out how forgetting factor affects the performance, in classification problem, accumulated accuracy (short as 'AA') is adopted as a performance measurement in the time series and is defined as,\vspace{-2mm}
\begin{equation}
\text{AA}(t) = \sum_{i=1}^t \mathbb{I} [\hat{y}(i)=y(i)]/t,
\end{equation}
where $\mathbb{I}(\cdot)$ is indicator function which takes 1 if $\cdot$ is true, and 0 otherwise, $\hat{y}(i)$ and $y(i)$ are predictive and ground-truth label, respectively. Figure \ref{figure:parameter} shows the impact of different forgetting factor $\mu$ over four datasets. We notice that the accumulated accuracy of RLS almost decreases all the time. For a relatively small but not zero $\mu$, the performance is satisfying without a significant gap. However, when $\mu$ is too large, say 0.5, the performance is even much worse than RLS. This is consistent with intuition since forgetting factor is so large and older data samples are exponentially downweighted that there are not sufficient effective training samples available to update the model. 
\begin{figure}[t]
\centering
\includegraphics[width=1\textwidth]{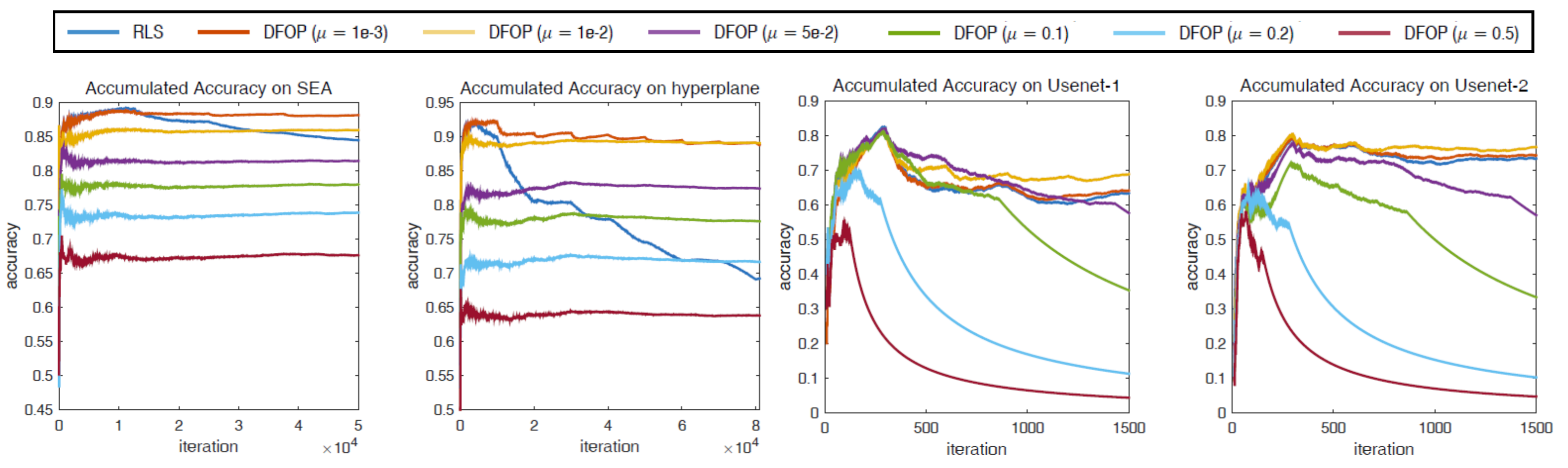}
\vspace{-5mm}
\caption{\small{Accumulated accuracy with different forgetting factors over four datasets with distribution change.}}
\label{figure:parameter}
\end{figure}
Now, here comes the question: how to choose an appropriate forgetting factor to adapt the distribution change in the data stream? To answer this question, let's recall the target function in \eqref{dynamic-forgetting-targ}, when $\mu$ is close to $0$, which is often the case in practice and validated in Figure \ref{figure:parameter}, then we have
\begin{equation}
 (1-\mu)^t = e^{t\ln(1-\mu)}\approx e^{-t\mu} = e^{-t/T_0}, 
\end{equation}
where we define $T_0 = 1/\mu$ as \textit{forgetting period}. The contribution for prediction error of data items older than $T_0$ time will be discounted with a weight less than  $e^{-1}\approx 36.8\%$ comparing to the current data.  As a matter of fact, the forgetting period in forgetting mechanism is pretty similar to the window size in sliding window technique. It can be regarded as a soft relaxation of window size. Consequently, the forgetting factor $\mu$ shall be chosen according to the forgetting period $T_0$, where the data distribution should be relatively smooth and stable during this forgetting period. 

We validate this idea over synthesis datasets reported in Table \ref{table:Parameter}. Theoretical recommended value $1/T_0$ and empirical appropriate value $\mu$ for forgetting factor are provides. Also, the relative proportions between them are calculated. We can see that these two value are very close over all datasets with no more than 20 times difference, even no more than 5 times in most datasets. This supports our strategy in choosing forgetting factor. 

\begin{table}[t]
\centering
\small
\caption{\small{Datasets, the number of data items between consecutive distribution change, theoretical recommended value and empirical appropriate value for forgetting factor are listed below, and the last column provides relative proportion between theoretical recommended value and empirical appropriate value.}}\vspace{2mm}
\label{table:Parameter}
\begin{tabular}{l|rccc||l|rccc}
\hline
Dataset     & \multicolumn{1}{c}{$T_0$}  & $1/T_0$  & $\mu$   & $\mu T_0$ 	&Dataset     & \multicolumn{1}{c}{$T_0$}  & $1/T_0$  & $\mu$   & $\mu T_0$\\ \hline
1CDT        & 400    & 2.50E-03 & 1.00E-02 & 4  	& UG-2C-2D    & 1,000  & 1.00E-03 & 1.00E-03 & 1  	\\
2CDT        & 400    & 2.50E-03 & 1.00E-02 & 4  	& UG-2C-3D    & 2,000  & 5.00E-04 & 1.00E-03 & 2  	\\
1CHT        & 400    & 2.50E-03 & 1.00E-02 & 4  	& UG-2C-5D    & 2,000  & 5.00E-04 & 1.00E-03 & 2  	\\
2CHT        & 400    & 2.50E-03 & 1.00E-02 & 4  	& MG-2C-2D    & 2,000  & 5.00E-04 & 1.00E-03 & 2  	\\
1CSurr      & 600    & 1.67E-03 & 5.00E-03 & 3  	& G-2C-2D & 2,000  & 5.00E-04 & 1.00E-03 & 2  	\\
hyperplane  & 9,000  & 1.11E-04 & 2.00E-03 & 18  	& SEA         & 10,000 & 1.00E-04 & 1.00E-03 & 10 	\\ \hline
\end{tabular}
\end{table}

Certainly, the drifting properties of real-world datasets are not as clear as synthetic datasets. Nevertheless, we could still infer the forgetting period $T_0$ based on the domain knowledge and choose an appropriate value as forgetting factor. For instance, considering the weather forecast dataset, although we cannot foreknow the drifting property of distribution, a relative stable period can still be estimated. 

\section{Conclusion}
In this paper, we proposed an approach based on forgetting mechanism called DFOP handling streaming learning problems with distribution change. The main idea is to downweight the older data items by introducing exponential forgetting factor without considering any prior about drifting information. Meanwhile, DFOP meets the one-pass constraints guaranteeing that only once will the data items be scanned without storing the entire dataset. Hence, DFOP. The storage requirement of DFOP is $O(d^2)$, where $d$ is the dimension of data, independent from the number of training examples. Both theoretical supports and empirical demonstrations for DFOP are presented to validate its effectiveness and practicality. 

Besides, how to efficiently reduce the storage and make DFOP paralleled to adapt a even larger scale real-world applications would be an interesting future work.

\section*{Appendix}
\setcounter{section}{0}
\section{Generalized DFOP and Proofs}
\subsection{Recursive Algorithm for Dynamic Discounted Factors}
\label{subsection: Appendix-1}
When dynamic discounted factors are introduced to downweight the contribution of older data items, the target function can be written as,
\begin{equation}
\label{dynamic-discount-targ}
\hat{\mathbf{w}}(t) = \mathop{\arg\min}_{\mathbf{w}\in \mathbb{R}^d}  \sum_{i=1}^t (\prod_{j=i+1}^t \lambda (j))\left[y(i) - \mathbf x(i)^\mathrm{T}\mathbf{w} \right]^2
\end{equation}
In this part, we provide a provably recursive algorithm to directly solve \eqref{dynamic-discount-targ} as shown in Algorithm \ref{alg:Downweight} named generalized DFOP algorithm, short as G-DFOP. 
\begin{algorithm}[H]
   \caption{Generalized DFOP}
   \label{alg:Downweight}
\begin{algorithmic}
   \STATE {\bfseries Input:} A stream of data with feature~$\{\mathbf{x}(t)\}_{t=1\cdots T}$~ and ~$\{y(t)\}_{t=1\cdots T}$, discounted factor sequence $\{\lambda(t)\}_{t=1\cdots T}$;
   \STATE {\bfseries Output:} Prediction label $\{\hat{y}(t)\}_{t=1\cdots T}$ (real value for regression and discrete-value for classification).
   \STATE {Initialize $P_0>0$},
   \FOR{$t=1$ {\bfseries to} $T$}
   \STATE {$P(t) = \frac{1}{\lambda(t)}[ P(t-1) - \frac{P(t-1)\mathbf{x}(t)\mathbf{x}(t)^\mathrm{T}P(t-1)}{\lambda(t) + \mathbf{x}(t)^\mathrm{T} P(t-1)\mathbf{x}(t)}]$}, 
   \STATE {$L(t) = P(t) \mathbf x(t)$},
   \STATE {$\hat{\mathbf{w}}(t) = \hat{\mathbf{w}}(t-1) + L(t)[y(t)-\hat{\mathbf{w}}(t-1)^\mathrm{T}\mathbf x(t)]$}.
   \STATE {$\hat{y}(t) = \hat{\mathbf{w}}(t)^\mathrm{T}\mathbf x(t)$. \quad \quad \  // for regression};
   \STATE {$\hat{y}(t) = \textbf{sign}[\hat{\mathbf{w}}(t)^\mathrm{T}\mathbf x(t)].$ // for classification}
   \ENDFOR    
\end{algorithmic}
\end{algorithm}

\subsection{Proof of Generalized DFOP}
In this part, we will prove the consistency between G-DFOP and target function in \eqref{dynamic-discount-targ}.

\begin{lemma}
\label{lemma-matrix-inverse}
Let $A,B,C$ and $D$ be matrices of compatible dimensions such that the product $BCD$ and the sum $A + BCD$ exist. Then we have 
\begin{equation}
\label{lemma-product}
[A+BCD]^{-1} = A^{-1} - A^{-1}B[DA^{-1}B + C^{-1}]^{-1}DA^{-1} 
\end{equation}
\end{lemma}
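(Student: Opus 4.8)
The plan is to prove the identity by direct verification rather than by derivation: I would simply check that multiplying $A + BCD$ by the candidate inverse
\[
N = A^{-1} - A^{-1}B[DA^{-1}B + C^{-1}]^{-1}DA^{-1}
\]
yields the identity matrix. Since the displayed expression only makes sense when $A$, $C$, and $DA^{-1}B + C^{-1}$ are invertible, these existence conditions are implicit in the statement; granting them, establishing $(A+BCD)N = I$ is enough to identify $N$ as $[A+BCD]^{-1}$.

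First I would introduce the abbreviations $G = [DA^{-1}B + C^{-1}]^{-1}$ and $H = DA^{-1}B$, so that $G^{-1} = H + C^{-1}$; this keeps the bookkeeping manageable. Expanding $(A+BCD)N$ and using $AA^{-1} = I$ produces four terms,
\[
(A+BCD)N = I - BG\,DA^{-1} + BCDA^{-1} - BC\,H\,G\,DA^{-1},
\]
where the last term arises because $DA^{-1}B$ appearing in the product collapses to $H$. The leading $I$ is exactly what we want, so the task reduces to showing that the three remaining terms cancel.

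Next I would factor $B$ on the left and $DA^{-1}$ on the right out of those three terms, reducing the problem to verifying that the inner bracket $-G + C - CHG$ vanishes. The key maneuver, and really the only non-mechanical step, is to recognise the factorisation $I + CH = C(C^{-1} + H) = CG^{-1}$; substituting it gives $(I+CH)G = CG^{-1}G = C$, so that $-G + C - CHG = C - (I+CH)G = C - C = 0$. This collapses the residual terms and completes the verification.

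I do not anticipate a genuine obstacle, since this is the classical Sherman--Morrison--Woodbury identity and the argument is purely algebraic. The only point demanding care is the non-commutativity of the matrices: every cancellation must respect the left-to-right order of factors, so the factorisation $I + CH = CG^{-1}$ must be applied on the correct side and cannot be blindly simplified. An equally short alternative would be to verify $N(A+BCD) = I$ instead, or to compute the inverse of the block matrix with diagonal blocks $A$ and $C^{-1}$ and off-diagonal blocks $B$ and $-D$ in two different orders; I would nonetheless keep the direct multiplication as the primary argument for transparency.
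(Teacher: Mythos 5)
Your proof is correct and takes essentially the same route as the paper's: both verify the Woodbury identity by direct multiplication of the candidate inverse against $A+BCD$ and showing the residual terms cancel via the factorisation $C^{-1}+DA^{-1}B = G^{-1}$ (the paper multiplies on the right, you on the left, which is immaterial for square matrices). The only cosmetic difference is that you make the implicit invertibility hypotheses explicit, which the paper leaves unstated.
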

\begin{proof}
Multiply the right-hand side of \eqref{lemma-product} by $A+BCD$ from the right, this gives
\begin{equation*}
\begin{split}
& \left\{ A^{-1} - A^{-1}B[DA^{-1}B + C^{-1}]^{-1}DA^{-1}\right\}[A+BCD] \\ 
=\  & I + A^{-1}BCD -A^{-1}B[DA^{-1}B+C^{-1}]^{-1}D\\
& - A^{-1}B[DA^{-1}B+C^{-1}]^{-1}DA^{-1}BCD\\
=\ & I + A^{-1}B[DA^{-1}B+C^{-1}]^{-1}\{0\} = I
\end{split}
\end{equation*}
\end{proof}
For convenience, let $\Lambda(i,t)=\prod_{j=i+1}^t \lambda (j)$, then the close-form solution of optimization \eqref{dynamic-discount-targ} can be calculated as follows
\begin{equation}
\small
\label{close-form}
\hat{\mathbf{w}}(t) = \left[\sum_{i=1}^t \Lambda(i,t) \mathbf x(i)\mathbf x(i)^\mathrm{T}  \right]^{-1} \left[\sum_{i=1}^t \Lambda(i,t) \mathbf x(i)y(i) \right]
\end{equation}
Now we will prove that the solution obtained by G-DFOP is equivalent to close-form solution in \eqref{close-form}.
~\\

\textbf{Theorem.}\ \textit{By the policy in Algorithm \ref{alg:Downweight}, we can achieve the same solution as result in \eqref{close-form}.}

\proof Denote $\bar{R}(t) = \sum_{k=1}^t \Lambda(i,t) \mathbf x(t)\mathbf x(t)^\mathrm{T}$, obviously
\[
\bar{R}(t) = \lambda(t)\bar{R}(t-1) + \mathbf x(t)\mathbf x(t)^\mathrm{T}
\]
Then the solution in \eqref{close-form} can be rewritten into the following form:
\begin{equation*}
\label{proof-close-form}
\begin{split}
\hat{\mathbf{w}}(t) &= \bar{R}^{-1}(t)\left[\lambda(t) \sum_{i=1}^{t-1} \Lambda(i,t-1) \mathbf x(i)y(i) + \mathbf x(t)y(t)\right]\\
&= \bar{R}^{-1}(t)\left[ \lambda(t)\bar{R}(t-1)\hat{\mathbf{w}}(t-1) + \mathbf x(t)y(t)\right]\\
&= \bar{R}^{-1}(t)\left[ (\bar{R}(t) - \mathbf x(t)\mathbf x(t)^\mathrm{T}) \hat{\mathbf{w}}(t-1) + \mathbf x(t)y(t)\right]\\
&= \hat{\mathbf{w}}(t-1) + \bar{R}^{-1}(t)\mathbf x(t)\left[y(t) - \mathbf x(t)^\mathrm{T} \hat{\mathbf{w}}(t-1))\right]
\end{split}
\end{equation*}

Now, we introduce  $P(t) =  \bar{R}^{-1}(t)$ and then apply Lemma \ref{lemma-matrix-inverse} to \eqref{proof-close-form}, this gives
\begin{equation*}
\begin{split}
P(t) & =  \frac{1}{\lambda(t)}P(t-1)- \frac{1}{\lambda(t)}P(t-1)\mathbf{x}(t) \\
& \cdot \left[\frac{1}{\lambda(t)}\mathbf{x}(t)^\mathrm{T}P(t-1)\mathbf{x}(t) +1\right]^{-1} \frac{1}{\lambda(t)}\mathbf{x}(t)^\mathrm{T}P(t-1)\\
&= \frac{1}{\lambda(t)}\left[ P(t-1) - \frac{P(t-1)\mathbf{x}(t)\mathbf{x}(t)^\mathrm{T}P(t-1)}{\lambda(t) + \mathbf{x}(t)^\mathrm{T} P(t-1)\mathbf{x}(t)}\right]\\
\end{split}
\end{equation*}
Let $L(t) = P(t)\mathbf{x}(t)$, we can obtain the policy described in Algorithm \ref{alg:Downweight}.

\textbf{Remark.}  Obviously, DFOP in paper is only a special case when fixing discounted factor sequence $\{\lambda(t)\}$ as $(1-\mu)$. Note that, for a simplicity notations in estimate error analysis of DFOP, we slightly modified $L(t)$ and $P(t)$ multiplying by $\mu$.

\section{Proof of Theorem 1}

\subsection{Definition of Sub-Gaussian and Sub-Exponential}
First we give typical definitions of sub-Gaussian random variable and random vector, meanwhile, definition of sub-Exponential random variable is also provided.
\begin{defn} (sub-Gaussian random variable) A random variable $X\in \mathbb{R}$ is said to be \textit{sub-Gaussian} with variance proxy $\sigma_2$ if $\mathbb{E}[X]=0$ and its moment generating function satisfies
\begin{equation}
\label{def:sub-gaussian}
\mathbb{E}[\exp(s X)] \leq \exp(\frac{s^2 \sigma^2}{2}),\quad	\forall s\in \mathbb{R}
\end{equation}
In this case we write $X \sim \texttt{subG}(\sigma^2)$.
\end{defn}

\begin{defn} (sub-Gaussian random vector) A random vector $\mathbf{x} \in \mathbb{R}^d = (x_1,\cdots,x_d)$ is called \textit{sub-Gaussian} with variance proxy $\sigma^2$ if all its coordinates are sub-Gaussian random variables with variance proxy $\sigma^2$.
\end{defn}

\begin{defn} (sub-Exponential random variable) A random variable $X\in \mathbb{R}$ is said to be \textit{sub-Exponential} with parameter $\lambda$ if $\mathbb{E}[X]=0$ and its moment generating function satisfies
\begin{equation}
\label{def:sub-exponential}
\mathbb{E}[\exp(s X)] \leq \exp(\frac{s^2 \lambda^2}{2}),\quad	\forall |s|\leq \frac{1}{\lambda}.
\end{equation}
In this case we write $X \sim \texttt{subE}(\lambda)$.
\end{defn}

\textbf{Remark.  }
Attention that definitions above all require a zero-mean constraint, which is not necessary in analysis "light tail" property. Hence, for random variable that is not zero-mean but satisfies condition \eqref{def:sub-gaussian} is called generalized sub-Gaussian. And definitions for generalized sub-Gaussian random vector and generalized sub-Exponential random variable are similar. 
 
\subsection{Proof of Theorem 1}
Theorem 1 in the paper presents a vector concentration inequality is shown for sub-Gaussian random vector sequence in the following which plays an important role in proving Lemma 2 and Lemma 3 in our paper. To prove Theorem 1 in the paper, first, we present following Lemma \ref{lemma:random vector} to show that the norm of a sub-Gaussian random vector is a generalized sub-Gaussian random variable.
\begin{lemma}
\label{lemma:random vector}
If $\mathbf{x}\in \mathbb{R}^d$ is a sub-Gaussian random vector with variance proxy $\sigma^2$, then 
\begin{equation}
\mathbb{E}\exp(\lVert \mathbf{x} \rVert) \leq 2\exp\left\{\frac{d^2\sigma^2}{2}\right\}
\end{equation}
\end{lemma}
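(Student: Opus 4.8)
The plan is to reduce the $\ell_2$ norm to the $\ell_1$ norm so that the exponential factorizes over coordinates, and then to control the resulting product by Hölder's inequality in order to circumvent the absence of any independence assumption among the coordinates. First I would use the elementary norm inequality $\lVert \mathbf{x}\rVert = \lVert\mathbf{x}\rVert_2 \leq \lVert\mathbf{x}\rVert_1 = \sum_{i=1}^d |x_i|$, which holds because the squared $\ell_1$ norm dominates the squared $\ell_2$ norm by nonnegativity of the cross terms. Monotonicity of $\exp$ then gives $\exp(\lVert\mathbf{x}\rVert) \leq \prod_{i=1}^d \exp(|x_i|)$, so that after taking expectations it suffices to bound $\mathbb{E}\prod_{i=1}^d \exp(|x_i|)$.

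Since the definition of a sub-Gaussian random vector only requires each coordinate to be sub-Gaussian with variance proxy $\sigma^2$, and does not assume the coordinates are independent, I cannot simply split the expectation into a product. Instead I would invoke Hölder's inequality with the $d$ conjugate exponents all equal to $d$, so that $\sum_{i=1}^d 1/d = 1$, yielding $\mathbb{E}\prod_{i=1}^d \exp(|x_i|) \leq \prod_{i=1}^d \left(\mathbb{E}\exp(d|x_i|)\right)^{1/d}$. This is the step that makes the whole argument go through without independence, and I expect it to be the conceptual crux: the naive temptation is to factorize the expectation using an independence that is not available.

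It then remains to control a single factor $\mathbb{E}\exp(d|x_i|)$. Here I would pass from $|x_i|$ to $x_i$ via $\exp(d|x_i|) \leq \exp(dx_i) + \exp(-dx_i)$ and apply the sub-Gaussian moment generating function bound from the definition at both $s = d$ and $s = -d$, obtaining $\mathbb{E}\exp(d|x_i|) \leq 2\exp(d^2\sigma^2/2)$. Substituting back, each factor contributes $(2\exp(d^2\sigma^2/2))^{1/d}$, and multiplying the $d$ identical factors makes the $d$ copies of the exponent $1/d$ multiply to $1$, returning exactly $2\exp(d^2\sigma^2/2)$, which is the claimed bound. The only subtlety beyond bookkeeping is that the definition imposes a zero-mean condition; if one instead prefers the generalized non-centered notion noted in the remark, the same chain of inequalities applies verbatim, since it never uses $\mathbb{E}[x_i]=0$ but only the moment generating function bound itself.
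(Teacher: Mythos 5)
Your proof is correct, and it takes a genuinely different route from the paper. The paper disposes of this lemma in one line by invoking Theorem~3.1 of Buldygin's work on exponential-moment inequalities, instantiating it with $\mathbf{c}=\mathbf{1}$, $G=\lVert\cdot\rVert$, and $B(\mathbf{c},\mathbf{x})=d\sigma$; the result is imported as a black box. You instead give a self-contained elementary argument: the norm comparison $\lVert\mathbf{x}\rVert_2\leq\lVert\mathbf{x}\rVert_1$ to factorize the exponential over coordinates, generalized H\"older with all exponents equal to $d$ to avoid any independence assumption, the bound $\exp(d|x_i|)\leq\exp(dx_i)+\exp(-dx_i)$ combined with the moment generating function condition at $s=\pm d$ to get $\mathbb{E}\exp(d|x_i|)\leq 2\exp(d^2\sigma^2/2)$, and the clean cancellation of the $1/d$ exponents against the $d$ identical factors. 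Every step checks out and reproduces the constant $2\exp\{d^2\sigma^2/2\}$ exactly. What your approach buys is transparency and independence from an external reference, plus an explicit verification of the point the paper only asserts later in a remark, namely that the argument uses only the MGF bound and never the zero-mean condition, so it extends verbatim to the ``generalized'' sub-Gaussian notion. What the paper's citation buys is brevity and access to a more general weighted statement should one need it elsewhere.
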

\proof The conclusion here is a direct corollary from  Theorem 3.1 in \cite{journals/buldygin2010inequalities}, where $\mathbf{c} = \mathbf{1} = (1,\cdots,1)$, $G = \lVert \cdot \rVert$, and $B(\mathbf{c},\mathbf{x})=\sum_{k=1}^d c_k \tau(x_k) = d\sigma$. 

Then we present the following Lemma \ref{lemma:equivalent} to show the equivalence between sub-Gaussian random variable and sub-Exponential variable, which plays an important role in proving Theorem 1.
\begin{lemma}
\label{lemma:equivalent}
Let $X$ be a sub-Gaussian random variable, i.e., $X\sim \texttt{subG}(\sigma^2)$. Then the random variable $Z = X^2 -\mathbb{E}[X^2]$ is sub-Exponential: $Z\sim \texttt{subE}(16\sigma^2)$. 
\end{lemma}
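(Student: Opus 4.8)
The plan is to verify the sub-exponential moment generating function condition \eqref{def:sub-exponential} for $Z = X^2 - \mathbb{E}[X^2]$ directly, by expanding $\mathbb{E}[\exp(sZ)]$ as a power series and controlling it through polynomial moment bounds on $X$. Write $m = \mathbb{E}[X^2]$; expanding the defining inequality \eqref{def:sub-gaussian} to second order and using $\mathbb{E}[X] = 0$ shows $0 \le m \le \sigma^2$, a fact I will need below. The first ingredient is a bound on the even moments of $X$. From \eqref{def:sub-gaussian} a Chernoff argument yields the tail estimate $\Pr(|X| > t) \le 2\exp(-t^2/(2\sigma^2))$; integrating $\mathbb{E}[|X|^{2k}] = \int_0^\infty 2k\,t^{2k-1}\Pr(|X|>t)\,dt$ against this tail and substituting $u = t^2/(2\sigma^2)$ produces $\mathbb{E}[X^{2k}] \le 2(2\sigma^2)^k\,k!$ for every $k \ge 1$.

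With the moments in hand, I would write $\mathbb{E}[\exp(sZ)] = e^{-sm}\,\mathbb{E}[\exp(sX^2)] = e^{-sm}\big(1 + sm + \sum_{k\ge 2}\tfrac{s^k\mathbb{E}[X^{2k}]}{k!}\big)$, keeping the exact first moment $m$ in the $k=1$ term and applying the moment bound only for $k \ge 2$. This is the crux of the argument and the step I expect to be the main obstacle: the loose moment bound applied naively to the $k=1$ term would leave a spurious linear-in-$s$ contribution of order $4s\sigma^2$, which is incompatible with the purely quadratic exponent $s^2\lambda^2/2$ demanded by \eqref{def:sub-exponential}. The resolution is to retain $m$ exactly and cancel the offending linear term via the elementary inequality $e^{-u}(1+u) \le 1$, valid for all real $u$, applied with $u = sm$. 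The remaining tail $\sum_{k\ge 2}\tfrac{s^k\mathbb{E}[X^{2k}]}{k!}$ is, by the moment bound, dominated by the geometric sum $2\sum_{k\ge 2}(2|s|\sigma^2)^k = 8s^2\sigma^4/(1 - 2|s|\sigma^2)$, which is genuinely $O(s^2)$.

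Putting these together gives $\mathbb{E}[\exp(sZ)] \le 1 + C\cdot 8s^2\sigma^4/(1 - 2|s|\sigma^2)$, where $C = 1$ for $s \ge 0$ and $C = e^{|s|m} \le e^{1/16}$ for $s < 0$ (the residual exponential factor, controlled since $|s|m \le 1/16$). Restricting to the range $|s| \le 1/(16\sigma^2) = 1/\lambda$ forces $2|s|\sigma^2 \le 1/8$, so the denominator exceeds $7/8$ and the entire $O(s^2)$ term is bounded by a small multiple of $s^2\sigma^4$; applying $1 + x \le e^x$ then shows this is comfortably dominated by $\exp(s^2(16\sigma^2)^2/2) = \exp(s^2\lambda^2/2)$, with ample slack in the constant. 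This establishes $Z \sim \texttt{subE}(16\sigma^2)$. Only the linear-term cancellation requires genuine care; the value $16$ is deliberately generous and the surrounding estimates are routine.
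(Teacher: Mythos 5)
Your proof is correct: the moment bound $\mathbb{E}[X^{2k}]\le 2(2\sigma^2)^k k!$ from the Chernoff tail, the cancellation via $e^{-u}(1+u)\le 1$ (valid for all real $u$), and the final numerics all check out --- on $|s|\le 1/(16\sigma^2)$ your bound is $\mathbb{E}[e^{sZ}]\le 1+e^{1/16}\tfrac{64}{7}s^2\sigma^4\approx 1+9.8\,s^2\sigma^4$, comfortably below the target $\exp(128 s^2\sigma^4)=\exp(s^2(16\sigma^2)^2/2)$. The overall strategy is the same as the paper's (power-series expansion of the moment generating function, even-moment bounds for a sub-Gaussian variable, geometric summation on the restricted range of $s$), but you decompose the centering differently. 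The paper expands $\mathbb{E}[e^{sZ}]$ directly, so the linear term vanishes outright because $\mathbb{E}[Z]=0$; the price is that it must then bound the \emph{centered} moments $\mathbb{E}[(X^2-\mathbb{E}[X^2])^k]$ through Jensen-type inequalities of the form $|a-b|^k\le 2^{k-1}(|a|^k+|b|^k)$ and an imported moment bound $\mathbb{E}[|X|^k]\le(2\sigma^2)^{k/2}k\Gamma(k/2)$ from the literature. You instead factor out $e^{-s\mathbb{E}[X^2]}$, expand the uncentered $\mathbb{E}[e^{sX^2}]$, and cancel the linear term multiplicatively; this makes the argument fully self-contained and sidesteps the centered-moment manipulations (whose constants in the paper's displayed chain are in fact somewhat loosely tracked), at the cost of the harmless extra factor $e^{|s|\mathbb{E}[X^2]}\le e^{1/16}$. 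One minor point worth noting: the paper's remark after this lemma emphasizes that its proof never uses $\mathbb{E}[X]=0$, since the lemma is later applied to the non-centered variable $\lVert\mathbf{x}(t)\rVert$; you invoke $\mathbb{E}[X]=0$ only to get $\mathbb{E}[X^2]\le\sigma^2$, and you could replace that step by the $k=1$ case of your own moment bound, $\mathbb{E}[X^2]\le 4\sigma^2$, so your argument extends to that generalized setting with no real change.
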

\proof We prove this lemma by definition.
\begin{eqnarray}
\mathbb{E}[e^{sZ}] &=& 1+ \sum_{k=2}^\infty \frac{s^k \mathbb{E}[X^2-\mathbb{E}[X^2]]^k}{k!} \nonumber\\
\label{eq-jensen-1}
& \leq & 1+ \sum_{k=2}^\infty \frac{s^k 2^{k}( \mathbb{E}[X^{2k}-(\mathbb{E}[X^2])^k)}{k!}\\ 
\label{eq-jensen-2}
& \leq & 1+ \sum_{k=2}^\infty \frac{s^k 4^{k}( \mathbb{E}[X^{2k}]}{2(k!)}\\
\label{eq-lemma4}
& \leq & 1+ \sum_{k=2}^\infty \frac{s^k 4^{k}2(2\sigma^2 k)}{2(k!)}\\
& = & 1+ (8s\sigma^2)^2 \sum_{k=0}^\infty (8s\sigma^2)^k \nonumber \\
\label{eq-def}
& = & 1 + 128s^2\sigma^4 \leq e^{128s^2\sigma^4}
\end{eqnarray}
where \eqref{eq-jensen-1} and \eqref{eq-jensen-2} because of Jensen's Inequality. \eqref{eq-lemma4} holds because of $\mathbb{E}[\lVert X\rVert^k]\leq (2\sigma^2)^{k/2}k\Gamma(k/2)$ in \cite{journal/vershynin2010intro}. The last step \eqref{eq-def} holds because the condition in definition of sub-Exponential, i.e., $|s| \leq 1/(16\sigma^2)$.

\textbf{Remark.}  Attention that in the proof of Lemma \ref{lemma:equivalent}, we didn't use the zero mean property of sub-Gaussian random variable $X$. Hence, Lemma \ref{lemma:equivalent} can be applied to a generalized sub-Gaussian random variable without requiring zero-mean condition.

Now, let's begin prove the Lemma 2 in the paper stated as follows,

\textit{For a sub-Gaussian random vector sequence $\{\mathbf{x}(t)\}$ with a variance proxy sequence $\{\sigma_t\}$, there exists a corresponding positive bounding sequence $\{\gamma_t\}$, such that 
\begin{equation}
\forall t\geq 1: \mathbb{E} \left\{ \exp\{\lVert \mathbf{x}(t)\rVert^2/\gamma_t^2\}\right\}\leq \exp\{1\}
\end{equation}}

\proof Consider any vector in the sequence, say $\mathbf{x}(t)$. Because it is a sub-Gaussian random vector, directly applying Lemma \ref{lemma:random vector}, we have 
\begin{equation*}
\mathbb{E}\exp(\lVert \mathbf{x}(t) \rVert) \leq 2\exp\left\{\frac{d^2\sigma_t^2}{2}\right\}
\end{equation*}
which means for a sub-Gaussian random vector, its norm is a generalized sub-Gaussian random variable. Here, "generalized" means it may not meet the zero-mean condition.

Because we didn't use the zero mean property of sub-Gaussian random variable in the proof of Lemma \ref{lemma:equivalent}, it can also be applied to generalized sub-Gaussian random variable. Let  $Z = \lVert \mathbf{x}(t) \rVert^2 - \mathbb{E}[\lVert \mathbf{x}(t) \rVert^2]$, then $Z\sim \texttt{subE}(16\sigma_t^2)$, specifically,
\[
\mathbb{E}[\exp\{ s(\lVert \mathbf{x}(t) \rVert^2 - \mathbb{E}[\lVert \mathbf{x}(t) \rVert^2])\}] \leq \exp\{s^2 16\sigma_t^2/2\}, \quad \forall |s| \leq \frac{1}{4\sigma_t}
\]
Thus, for $\forall |s| \leq 1/(4\sigma_t)$, we have 
\begin{equation}
\label{eq-condition}
\mathbb{E}[\exp\{ s\lVert \mathbf{x}(t) \rVert^2\}] \leq \exp\{8s^2 \sigma_t^2+ s\mathbb{E}[\lVert \mathbf{x}(t) \rVert^2]\}, \quad 
\end{equation}
Obviously, we can choose a sufficient small positive constant $s$ as $\gamma_t$, such that
\[
\mathbb{E} \left\{ \exp\{\lVert \mathbf{x}(t)\rVert^2/\gamma_t^2\}\right\}\leq \exp\{1\}.
\]
And $\{\gamma_t\}$ is exactly the bounding sequence as we desired. This completes the proof.

Now, we prove the Theorem 1 in the paper stated as follows,
\begin{thm}
\label{lemma:concentration}
In an Euclidean space $(\mathbb{R}^n,\lVert \cdot \rVert_2)$, let E-valued martingale-difference sub-Gaussian sequence $\xi^\infty$ with a corresponding bounding sequence $\sigma^N = [\sigma_1;\cdots; \sigma_N]$. Let $S_N = \sum_{i=1}^N \xi_i$, then for all $N \geq 1$ and $\gamma \geq 0$:
\begin{eqnarray}
\label{concentration-condition}
\Pr \left\{ \lVert S_N \rVert \geq  \sqrt{2} (1+ \gamma)\sqrt{\sum_{i=1}^N \sigma_i^2} \right\} \leq  \exp\left\{ -\gamma^2/3\right\},
\end{eqnarray}
\end{thm}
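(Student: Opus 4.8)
The plan is to obtain this statement as a direct specialization of Theorem 2.1 of \cite{journal/juditsky2008large} to the Euclidean setting, so the bulk of the work is verifying that the hypotheses of that theorem are met and then reading off the constants rather than re-deriving the concentration inequality from first principles. First I would recall that Theorem 2.1 there applies to vector-valued martingales in a $(2,D)$-smooth normed space whose increments obey a conditional light-tail condition $(\mathcal{C}_\alpha[\sigma^\infty])$, and delivers a large-deviation bound of the shape $\Pr\{\|S_N\| \geq C(1+\gamma)\sqrt{\sum_i \sigma_i^2}\} \leq \exp\{-c\gamma^2\}$. The concrete constants $\sqrt{2}$ and $1/3$ appearing in our statement are precisely those this bound produces in the smoothest case, namely a Hilbert space, so the first step is to record that $(\mathbb{R}^n,\|\cdot\|_2)$ is a Hilbert space and hence $2$-smooth with the minimal smoothness constant (the parallelogram law gives $D=1$).

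The key translation step is to cast our assumption into the condition $(\mathcal{C}_\alpha[\sigma^\infty])$ with $\alpha=2$. By Lemma \ref{theorem:sub-Gaussian}, the bounding sequence $\{\sigma_i\}$ attached to the sub-Gaussian martingale differences $\{\xi_i\}$ satisfies $\mathbb{E}\exp\{\|\xi_i\|^2/\sigma_i^2\}\leq \exp\{1\}$, which is exactly the $\alpha=2$ instance of the light-tail requirement. I would then invoke the martingale-difference structure to upgrade this to its conditional form $\mathbb{E}[\exp\{\|\xi_i\|^2/\sigma_i^2\}\mid \mathcal{F}_{i-1}]\leq \exp\{1\}$, which is what Theorem 2.1 actually demands. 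With the smoothness constant fixed and $(\mathcal{C}_2[\sigma^\infty])$ verified, the conclusion of Theorem 2.1 specializes verbatim to $\Pr\{\|S_N\|\geq \sqrt{2}(1+\gamma)\sqrt{\sum_i \sigma_i^2}\}\leq \exp\{-\gamma^2/3\}$, which is the claim.

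The main obstacle I expect is the passage from the unconditional moment bound furnished by Lemma \ref{theorem:sub-Gaussian} to the conditional light-tail condition that Theorem 2.1 requires. For a sequence of \emph{independent} increments this is immediate, but for a genuine martingale-difference sequence one must argue that the sub-Gaussian property, and hence the associated bounding constant $\sigma_i$, holds conditionally on the past $\sigma$-algebra $\mathcal{F}_{i-1}$; this is the point where the ``$E$-valued martingale-difference'' hypothesis is genuinely used. A secondary delicate point is matching the numerical constants: one has to confirm that the generic constants $C(D,\alpha)$ and $c(D,\alpha)$ of the source theorem collapse to $\sqrt{2}$ and $1/3$ in the Hilbert-space, $\alpha=2$ regime, so that the corollary is stated with the sharp values rather than loose surrogates.
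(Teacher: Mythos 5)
Your proposal is correct and follows essentially the same route as the paper: both verify the light-tail condition $(\mathcal{C}_\alpha[\sigma^\infty])$ via Lemma \ref{theorem:sub-Gaussian}, record that $(\mathbb{R}^n,\lVert\cdot\rVert_2)$ has the optimal smoothness/regularity constants, and then read the bound off Theorem 2.1 of \cite{journal/juditsky2008large}. Your explicit flagging of the unconditional-to-conditional upgrade of the moment bound is if anything slightly more careful than the paper, which simply asserts the conditional form $\mathbb{E}_{i-1}\{\exp\{\lVert\xi_i\rVert^2/\sigma_i^2\}\}\leq\exp\{1\}$ without comment.
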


\proof First of all, for a sub-Gaussian sequence $\xi^\infty$ with corresponding bounding sequence $\sigma^\infty$, from Theorem 1 in paper, we have $\mathbb{E}_{i-1}\left\{ \exp\{\frac{\lVert \xi_i \rVert^2}{\sigma_i^2}\} \right\} = \exp\{1\}$. Besides, an Euclidean space $(\mathbb{R}^n,\lVert \cdot \rVert_2)$ is 1-smooth and 1-regular, which means $\kappa =1$. Then, it follows immediately according to Theorem 2.1 proposed in \cite{journal/juditsky2008large}.

\section{Proof of Theorem 2}
Firstly, the following generalized summation by parts is essential for the proof of main theorem in our paper.

\begin{lemma}
\label{lemma:summation}
Let $\left \langle {f_n} \right \rangle$ and $\left \langle {g_n} \right \rangle$ be two sequences. And denote $G_n = \sum_{k = 1}^n g_k$, then for $m\geq 2$ we have,
\begin{equation*}
\small
\begin{split}
\sum_{k=m}^n f_k g_k &= f_n G_n - f_m G_{m-1} - \sum_{k = m}^{n - 1} G_k\left( f_{k+1} - f_k \right)\\
\end{split}
\end{equation*}
\end{lemma}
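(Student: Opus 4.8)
The plan is to prove this by the classical Abel (summation-by-parts) manipulation, treating the partial-sum sequence $G_n$ as primitive and writing each increment $g_k$ as a difference of consecutive partial sums. Since $G_n = \sum_{k=1}^n g_k$, I have the telescoping identity $g_k = G_k - G_{k-1}$, which is valid for every $k \geq 2$; because the hypothesis assumes $m \geq 2$, this representation holds throughout the summation range $k = m, \dots, n$, and moreover $G_{m-1}$ is a genuine (nonempty) partial sum.

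First I would substitute $g_k = G_k - G_{k-1}$ into the left-hand side and split into two sums,
\[
\sum_{k=m}^n f_k g_k = \sum_{k=m}^n f_k G_k - \sum_{k=m}^n f_k G_{k-1}.
\]
Next I would reindex the second sum by setting $j = k-1$, turning it into $\sum_{j=m-1}^{n-1} f_{j+1} G_j$. The two sums now run over nearly the same range, differing only at the endpoints: the first carries an extra top term $f_n G_n$ (at $k=n$) and the second carries an extra bottom term $f_m G_{m-1}$ (at $j = m-1$). I would peel these boundary contributions off, leaving both residual sums over the common index range $k = m, \dots, n-1$.

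Combining the two residual sums over that common range and factoring out $G_k$ gives
\[
\sum_{k=m}^n f_k g_k = f_n G_n - f_m G_{m-1} + \sum_{k=m}^{n-1} \bigl(f_k - f_{k+1}\bigr) G_k,
\]
which is precisely the claimed identity after writing $f_k - f_{k+1} = -(f_{k+1}-f_k)$. The argument involves no analysis or inequalities — it is a purely algebraic rearrangement — so the only real obstacle is bookkeeping: one must track the index shift carefully so that exactly the right boundary terms $f_n G_n$ and $f_m G_{m-1}$ are isolated and the two interior sums align over the identical range $m \le k \le n-1$. I would double-check the endpoints by verifying the base case $m = n$ (where both sums are empty and the identity reduces to $f_n g_n = f_n G_n - f_n G_{n-1}$, i.e.\ the defining relation $g_n = G_n - G_{n-1}$), which confirms the boundary terms are placed correctly.
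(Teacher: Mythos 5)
Your proof is correct and follows exactly the route the paper takes: the paper's own (one-line) proof is precisely to expand $g_k = G_k - G_{k-1}$ in the left-hand side and rearrange, which is what you carry out in detail. Your explicit handling of the reindexing and the boundary terms $f_n G_n$ and $f_m G_{m-1}$ just fills in the bookkeeping the paper leaves to the reader.
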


\proof The right-hand side can be easily verified by expanding $g_k$ in the left-hand as $ G_k - G_{k-1}$.

\textbf{Remark.}  When $m=1$, the same derivation gives the famous Abel transformation,
\[
\sum_{k = 1}^n f_k g_k = f_n G_n - \sum_{k = 1}^{n - 1} G_k\left( f_{k+1} - f_k \right)
\]

\subsection{Proof of Lemma 2 in paper}
Lemma 2 in our paper states as follows,

\textit{Let $\{\mathbf{x}(t)\}$ be an $E$-valued martingale-difference $d$-dimension sub-Gaussian random vector sequence, with corresponding bounding sequence $\{\gamma_t\}$, and $Z(t)\in \mathbb{R}^{d\times d}, Y(t) = (1-\mu)Y(t-1) + \mu Z(t), t\geq 1$. Then for $\mu \in (0,1)$, with a probability at least $1-\frac{\delta}{2}$, we have 
\begin{equation*}
\small
\begin{split}
\left\lVert  \sum_{k=1}^t (1-\mu)^{t-k}Y(k)\mathbf{x}(k)\right\rVert \leq \sqrt{2}(1+\sqrt{3 \ln (2t/\delta)}) \cdot \gamma^*(\|Y(0)\| +  Z^*) \mu^{-\frac{1}{2}}
\end{split}
\end{equation*}
where $Z^* = \sup_{k=1,\cdots,t}\lVert Z(k)\rVert$ and $\gamma^* = \sup_{k=1,\cdots,t}\gamma_k$.}

\begin{proof}
Denote $S(t,k)\triangleq \sum_{i=k}^t \mathbf{x}(i)$, then with the recursive property of $Y(t)$, left side can be expanded by Lemma \ref{lemma:summation},
\begin{eqnarray*}
\left\|  \sum_{k=1}^t (1-\mu)^{t-k}Y(k)\mathbf{x}(k)\right\| &=&  \left\lVert  (1-\mu)^tY(0)S(t,1) + \sum_{k=1}^t \mu (1-\mu)^{k-1}Z(t-k+1)S(t,t-k+1) \right\rVert \\
\label{eq-Minkowski}
&\leq & \ (1-\mu)^t\|Y(0)S(t,1)\| + \sum_{k=1}^t \mu (1-\mu)^{k-1}\|Z(t-k+1)S(t,t-k+1)\| \\
\label{eq-cauchy}
&\leq & \ (1-\mu)^t\|Y(0)\| \left\lVert\sum_{i=1}^t \mathbf{x}(i)\right\rVert  \nonumber + Z^*\sum_{k=1}^t \mu (1-\mu)^{k-1} \left\lVert \sum_{i=t-k+1}^t \mathbf{x}(i)\right\rVert
\end{eqnarray*}
And it could be separated into two parts, i.e.,
\[
(a) = \ (1-\mu)^t\|Y(0)\| \left\lVert\sum_{i=1}^t \mathbf{x}(i)\right\rVert; \text{ and }(b) = Z^*\sum_{k=1}^t \mu (1-\mu)^{k-1} \left\lVert \sum_{i=t-k+1}^t \mathbf{x}(i)\right\rVert.
\] 

For $k= 1,\cdots, t$, based on Lemma~\ref{lemma:concentration}, we have
\begin{eqnarray*}
\small
\Pr \left\{ \left\lVert \sum_{i=k}^t \mathbf{x}(i)\right\rVert \leq \sqrt{2}(1+\sqrt{3 \ln\frac{2t}{\delta}})\sqrt{\sum_{i=k}^t \gamma_i^2} \right\}\geq  1-\frac{\delta}{2t}
\end{eqnarray*}
taking by the union bound over $t>0$, and let $\gamma^* = \sup_{k=1,\cdots,t}\gamma_k$ the following holds in a probability at least $1-\frac{\delta}{2}$,
\begin{eqnarray*}
(b) &\leq & \sqrt{2}\left(1+\sqrt{3 \ln\frac{2t}{\delta}}\right) Z^*\sum_{k=1}^t \mu (1-\mu)^{k-1} \sqrt{\sum_{i=t-k+1}^t \gamma_i^2}\\
&\leq & \sqrt{2}\left(1+\sqrt{3 \ln\frac{2t}{\delta}}\right) Z^* \gamma^*\sum_{k=1}^t \mu (1-\mu)^{k-1} \sqrt{k}\\
&\leq & \sqrt{2}\left(1+\sqrt{3 \ln\frac{2t}{\delta}}\right) Z^* \gamma^* \mu^{-1/2} \\
\end{eqnarray*}
Conditioning on all above concentration inequalities hold, for $(a)$, we have 
\begin{eqnarray*}
(a) &\leq & (1-\mu)^t\|Y(0)\| \sqrt{2}(1+\sqrt{3 \ln\frac{2t}{\delta}})\sqrt{\sum_{i=k}^t \gamma_i^2}\\
& \leq & \sqrt{2}(1+\sqrt{3 \ln\frac{2t}{\delta}})\|Y(0)\| \gamma^*(1-\mu)^t \sqrt{t} \\
& \leq & \sqrt{2}(1+\sqrt{3 \ln\frac{2t}{\delta}})\|Y(0)\| \gamma^* \mu^{-1/2} 
\end{eqnarray*}
Hence, combining $(a)$ and $(b)$, we complete the proof.
\end{proof}

\subsection{Proof of Lemma 3 in paper}
Lemma 3 in our paper states as follows, 

\textit{Let $\{\epsilon(t)\}$ be an independent (or $E$-valued martingale-difference) sub-Gaussian random variable sequence, with corresponding bounding sequence (i.e., variance proxy sequence) $\{\sigma_t\}$, and $\mathbf x(t)\in \mathbb{R}^{d}, t\geq 1$. Then for $\mu \in (0,1)$, with a probability at least $1-\frac{\delta}{2}$, we have 
\begin{equation*}
\small
\left\|  \sum_{k=1}^t (1-\mu)^{t-k}\mathbf{x}(k)\epsilon(k)\right\| \leq 2\sqrt{2}\left(1+\sqrt{3 \ln\frac{2t}{\delta}}\right)\sigma^* \mu^{-\frac{1}{2}}
\end{equation*}
where $\sigma^* = \sup_{k=1,\cdots,t}\lVert \mathbf{x}(k)\rVert \cdot \sup_{k=1,\cdots,t}\sigma_k $.}

\begin{proof}
First, we examine the "light tail" condition, i.e.,condition $(\mathcal{C}_\alpha[\sigma^\infty])$ in Theorem 2.1 proposed in \cite{journal/juditsky2008large}, let $\sigma'^2_i = (\max \lVert \mathbf{x}(k) \rVert^2)\sigma^2_i, \forall i\geq 1$, then
\begin{eqnarray*}
\mathbb{E}_{i-1}\left\{ \exp\{\frac{\lVert \epsilon(k) \mathbf{x}(k) \rVert^2}{\sigma'^2_i}\} \right\}& \leq &\mathbb{E}_{i-1}\left\{ \exp\{\frac{\lVert \epsilon(k) \rVert^2 \lVert \mathbf{x}(k) \rVert^2}{\sigma'^2_i}\} \right\} \\
&= & \mathbb{E}_{\lVert \mathbf{x}(k) \rVert}\mathbb{E}_{ \epsilon(k)} \left\{ \exp\{\frac{\lVert \epsilon(k) \rVert^2 \lVert \mathbf{x}(k) \rVert^2}{\sigma^2_i(\max \lVert \mathbf{x}(k) \rVert^2) }\} \right\}\\
&\leq & \mathbb{E}_{\lVert \mathbf{x}(k) \rVert}\mathbb{E}_{ \epsilon(k)}  \exp\{\frac{\lVert \epsilon(k) \rVert^2}{\sigma^2_i}\} \leq  \exp\{1\} 
\end{eqnarray*}
Hence, conclusion in Lemma \ref{lemma:concentration} holds on sequence $\{\mathbf{x}(t)\epsilon(t)\}$. Similar to the proof in proof of Lemma 2(original paper), by taking $Y(t)\equiv Z(t) \equiv I$, we complete the proof.
\end{proof}

\subsection{Proof of Theorem 2}
Now let's begin to prove the main theorem, i.e., Theorem 2 in our paper states as follows.
\begin{thm}
\textit{Assume following conditions be satisfied:\vspace{-3mm}
\begin{itemize}
\item[(\expandafter{\romannumeral1})] drift term $\{ \mathbf{s}(t)\}$ is an $E$-valued martingale-difference sub-Gaussian random vector sequence, with corresponding bounding sequence $\{\gamma_t\}$; \vspace{-1mm}
\item[(\expandafter{\romannumeral2})] output noise $\{\epsilon(t)\}$ is an independent (or $E$-valued martingale-difference) sub-Gaussian random variable sequence, with corresponding bounding sequence (i.e., variance proxy sequence) $\{\sigma_t\}$.\vspace{-1mm}
\end{itemize}
Then with a probability at least $1-\delta$, we have 
\begin{equation*}
\label{error-bound}
\| \mathbf{w}(t)-\hat{\mathbf{w}}(t)\rVert \leq K \left\{ (1-\mu)^t\lVert R(0) \rVert \lVert \tilde{\mathbf{w}}(0) \rVert + \sqrt{2} (1+\sqrt{3 \ln (2t/\delta)}) \cdot [2\sigma^*\mu^{1/2} + \gamma^*(\|R(0)\|+{x^*}^2) \mu^{-1/2}]\right\}
\end{equation*}
where $K = \sup_{k=1,\cdots,t}\lVert P(k)\rVert$, $x^* = \sup_{k=1,\cdots,t}\lVert \mathbf{x}(k)\rVert$, $\sigma^* = \sup_{k=1,\cdots,t} \lVert \mathbf{x}(k)\rVert \cdot \sup_{k=1,\cdots,t}\sigma_k$ and $\gamma^* = \sup_{k=1,\cdots,t}\gamma_k$.}
\end{thm}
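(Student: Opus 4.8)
The plan is to track the estimation error vector $\tilde{\mathbf{w}}(t) \triangleq \mathbf{w}(t) - \hat{\mathbf{w}}(t)$ and reduce it to a geometric recursion that the two concentration lemmas can act on. Writing $R(t) = P(t)^{-1}$, the (rescaled) matrix update of DFOP takes the information form $R(t) = (1-\mu)R(t-1) + \mu\,\mathbf{x}(t)\mathbf{x}(t)^{\mathrm T}$, which is exactly the recursion governing $Y(t)$ in Lemma~\ref{lemma:estimate-1} with $Z(t) = \mathbf{x}(t)\mathbf{x}(t)^{\mathrm T}$. First I would substitute the observation model $y(t) = \mathbf{x}(t)^{\mathrm T}\mathbf{w}(t-1) + \epsilon(t)$ and the drift model $\mathbf{w}(t) = \mathbf{w}(t-1) + \mathbf{s}(t)$ into the weight update $\hat{\mathbf{w}}(t) = \hat{\mathbf{w}}(t-1) + \mu P(t)\mathbf{x}(t)[y(t) - \hat{\mathbf{w}}(t-1)^{\mathrm T}\mathbf{x}(t)]$. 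Since the innovation equals $\mathbf{x}(t)^{\mathrm T}\tilde{\mathbf{w}}(t-1) + \epsilon(t)$, this gives
\begin{equation*}
\tilde{\mathbf{w}}(t) = \left[I - \mu P(t)\mathbf{x}(t)\mathbf{x}(t)^{\mathrm T}\right]\tilde{\mathbf{w}}(t-1) + \mathbf{s}(t) - \mu P(t)\mathbf{x}(t)\epsilon(t).
\end{equation*}

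The key algebraic step is to rewrite the transition factor. Using $\mu\,\mathbf{x}(t)\mathbf{x}(t)^{\mathrm T} = R(t) - (1-\mu)R(t-1)$ together with $P(t)R(t) = I$, the bracket collapses to $I - \mu P(t)\mathbf{x}(t)\mathbf{x}(t)^{\mathrm T} = (1-\mu)P(t)R(t-1)$. Left-multiplying the error recursion by $R(t)$ and setting $\mathbf{v}(t) \triangleq R(t)\tilde{\mathbf{w}}(t)$ then removes the state-dependent factor and produces the clean recursion $\mathbf{v}(t) = (1-\mu)\mathbf{v}(t-1) + R(t)\mathbf{s}(t) - \mu\,\mathbf{x}(t)\epsilon(t)$. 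Unrolling it yields
\begin{equation*}
R(t)\tilde{\mathbf{w}}(t) = (1-\mu)^t R(0)\tilde{\mathbf{w}}(0) + \sum_{k=1}^t (1-\mu)^{t-k}R(k)\mathbf{s}(k) - \mu\sum_{k=1}^t (1-\mu)^{t-k}\mathbf{x}(k)\epsilon(k).
\end{equation*}

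Next I would recover the target norm through $\tilde{\mathbf{w}}(t) = P(t)\mathbf{v}(t)$, so that $\|\tilde{\mathbf{w}}(t)\| \leq K\|\mathbf{v}(t)\|$ with $K = \sup_k\|P(k)\|$, and bound the three summands of $\mathbf{v}(t)$ by the triangle inequality. The deterministic first term is immediately $(1-\mu)^t\|R(0)\|\,\|\tilde{\mathbf{w}}(0)\|$. For the drift term I apply Lemma~\ref{lemma:estimate-1} with $Y(k) = R(k)$, $Z^* = {x^*}^2$, and the drift sequence $\{\mathbf{s}(k)\}$ (sub-Gaussian with bounding sequence $\{\gamma_t\}$ by hypothesis (i)) playing the role of $\{\mathbf{x}(k)\}$, yielding $\sqrt{2}(1+\sqrt{3\ln(2t/\delta)})\gamma^*(\|R(0)\| + {x^*}^2)\mu^{-1/2}$. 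For the noise term I apply Lemma~\ref{lemma:estimate-2} to $\sum_{k}(1-\mu)^{t-k}\mathbf{x}(k)\epsilon(k)$ and multiply by $\mu$, producing $2\sqrt{2}(1+\sqrt{3\ln(2t/\delta)})\sigma^*\mu^{1/2}$. Each lemma holds with probability $1-\delta/2$, so a union bound delivers the stated $1-\delta$ confidence after collecting terms and factoring out $K$.

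The main obstacle I anticipate is twofold. First, the reduction $I - \mu P(t)\mathbf{x}(t)\mathbf{x}(t)^{\mathrm T} = (1-\mu)P(t)R(t-1)$ — which is precisely what makes the substitution $\mathbf{v}(t) = R(t)\tilde{\mathbf{w}}(t)$ telescope into a pure geometric recursion — depends on the exact rescaled form $R(t) = (1-\mu)R(t-1) + \mu\mathbf{x}(t)\mathbf{x}(t)^{\mathrm T}$, so I must first pin this down consistently with the $\mu$-rescaling of $P(t)$ and $L(t)$ noted after the G-DFOP derivation. Second, applying Lemma~\ref{lemma:estimate-1} to $\sum_k (1-\mu)^{t-k}R(k)\mathbf{s}(k)$ requires that the matrix weights $R(k)$ remain compatible with the martingale-difference structure of $\{\mathbf{s}(k)\}$ underlying the vector concentration of Theorem~\ref{thm:concentration}; I would therefore need to argue that $R(k)$ is suitably adapted (e.g., that the features generating $R(k)$ are independent of, or revealed before, the drift increment $\mathbf{s}(k)$) so that the conditional sub-Gaussian bound furnished by Lemma~\ref{theorem:sub-Gaussian} is preserved when $R(k)$ multiplies $\mathbf{s}(k)$.
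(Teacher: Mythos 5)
Your proposal follows essentially the same route as the paper's own proof: define $R(t)=P(t)^{-1}$, derive the linear recurrence for $R(t)\tilde{\mathbf{w}}(t)$ (your version merely makes explicit the intermediate identity $I-\mu P(t)\mathbf{x}(t)\mathbf{x}(t)^{\mathrm T}=(1-\mu)P(t)R(t-1)$, and differs only by an immaterial sign convention on $\tilde{\mathbf{w}}$), unroll it, split by the triangle inequality, apply Lemma \ref{lemma:estimate-2} to the noise sum and Lemma \ref{lemma:estimate-1} with $Y(k)=R(k)$, $Z^*={x^*}^2$ to the drift sum, and finish with a union bound. The two caveats you flag (the exact $\mu$-rescaled form of the $R(t)$ recursion, and the adaptedness of $R(k)$ to the martingale-difference structure of $\{\mathbf{s}(k)\}$) are real points that the paper itself passes over with "it's trivial to verify," so raising them is appropriate rather than a gap in your argument.
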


\proof
Define $R(t)\triangleq P^{-1}(t)$, then we have 
\[ 
\begin{split}
\Vert \bm{\tilde{\mathbf{w}}}(t) \Vert &=  \Vert P(t)R(t)\tilde{\mathbf{w}}(t) \Vert \leq \Vert P(t) \| \|R(t)\tilde{\mathbf{w}}(t)\| \leq K \|R(t)\tilde{\mathbf{w}}(t)\|
\end{split}
\]

so we only need to consider $\|R(t)\tilde{\mathbf{w}}(t)\|$. Recall model assumption, drifting assumption and update rule,  
\begin{equation*}
\begin{split}
&y(t) = \mathbf x(t)^{\rm T} \mathbf w(t-1)+\epsilon(t)\\
&\mathbf{w}(t) = \mathbf{w}(t-1) + \mathbf{s}(t)\\ 
&\hat{\mathbf{w}}(t) = \hat{\mathbf{w}}(t-1) + \mu P(t)\mathbf{x}(t)\left[y(t)-\mathbf x(t)^\mathrm{T}\hat{\mathbf{w}}(t-1)\right]\\
\end{split}
\end{equation*}
we can obtain linear recurrence relations of $R(t)\tilde{\mathbf{w}}(t)$,
\begin{equation*}
R(t)\tilde{\mathbf{w}}(t) = (1-\mu) R(t-1)\tilde{\mathbf{w}}(t-1)+\mu \mathbf{x}(t)\epsilon(t) - R(t)\mathbf{s}(t)
\end{equation*}
which gives 
\begin{equation*}
\begin{split}
R(t)\tilde{\mathbf{w}}(t) &= (1-\mu)^t R(0)\tilde{\mathbf{w}}(0) + \sum_{k=1}^t (1-\mu)^{t-k}[\mu\mathbf{x}(k)\epsilon(k)- R(k)\mathbf{s}(k)]
\end{split}
\end{equation*}
Hence, by Minkowski's inequality, it can be bounded by
\begin{equation}
\label{split}
\left\lVert R(t)\tilde{\mathbf{w}}(t)\right\rVert \leq (1-\mu)^t \left\lVert R(0)\tilde{\mathbf{w}}(0)\right\rVert +  \left\lVert\sum_{k=1}^t (1-\mu)^{t-k}\mu\mathbf{x}(k)\epsilon(k)\right\rVert +  \left\lVert \sum_{k=1}^t (1-\mu)^{t-k} R(k)\mathbf{s}(k)\right\rVert
\end{equation}

When output noise $\{\epsilon(t)\}$ is an independent (or $E$-valued martingale-difference) sub-Gaussian random variable sequence, with corresponding bounding sequence (i.e., variance proxy sequence) $\{\sigma_t\}$, then directly apply Lemma 3(original paper) on the second term in \eqref{split} gives, with at least $1-\frac{\delta}{2}$, 
\begin{equation}
\label{bound-2}
\mu \left\|  \sum_{k=1}^t (1-\mu)^{t-k}\mathbf{x}(k)\epsilon(k)\right\| \leq 2\sqrt{2}\left(1+\sqrt{3 \ln\frac{2t}{\delta}}\right)\sigma^* \mu^{1/2}
\end{equation}
where $\sigma^* = \sup_{k=1,\cdots,t} \lVert \mathbf{x}(k)\rVert \cdot \sup_{k=1,\cdots,t}\sigma_k$.

When the drift term $\{ \mathbf{s}(t)\}$ is an $E$-valued martingale-difference sub-Gaussian random vector sequence, with corresponding bounding sequence $\{\gamma_t\}$. Besides, it's trivial to verified that $R(k)$ meets the decreasing recursive structure stated, then directly apply Lemma 2 on the last term in \eqref{split} gives, with at least $1-\frac{\delta}{2}$, 
\begin{equation}
\label{bound-3}
\left\|  \sum_{k=1}^t (1-\mu)^{t-k}R(k)\mathbf{s}(k)\right\| \leq \sqrt{2}\left(1+\sqrt{3 \ln\frac{2t}{\delta}}\right)\gamma^*(\|R(0)\| +  {x^*}^2) \mu^{-1/2}
\end{equation}
where $x^* = \sup_{k=1,\cdots,t}\lVert \mathbf{x}(k)\rVert$ and $\gamma^* = \sup_{k=1,\cdots,t}\gamma_k$.

Combining \eqref{bound-2} and \eqref{bound-3} by union bound, we get desired estimate error bound in assertion, hence complete the proof.\qed

\section{Additional Experiments}
\label{subsection-4.1}
\subsection{Regression}
In this part, we compare the proposed DFOP to state-of-the-art streaming regression approaches and on both synthesis and real-world datasets. The comparison methods are (a) RLS, least square approach solved in a recursive manner, (b) Online Bagging (OB)~\cite{conf/kdd/OzaR01}, (c) AddExp.C~\cite{conf/icml/KolterM05}, (d) EOS-ELM~\cite{journals/ijon/LanSH09}, (e) $\text{Learn}^{++}$.NSE~\cite{journals/tnn/ElwellP11}, and (f) OAUE~\cite{journals/isci/BrzezinskiS14}, where (b)-(f) are all ensemble style approaches which dynamically adapt the models based on the coming data item or data batch, the differences are the strategies on how to update models like adding new models, excluding models or adjusting models' weights.

Both synthesis dataset \textit{hyperplane} and real-world datasets \textit{Sulfur recovery unit} and \textit{Debutanizer column} are employed to demonstrate the effectiveness of proposed DFOP. The detailed descriptions of datasets are included in Section 4.1 of supplementary material.

The performance of various streaming regression approaches is assessed by MSE (mean square error) between ground-truth and predict values over 10 trails. Both mean and standard deviation are reported in Table \ref{table:Regression}. It has to be pointed out that, not all comparisons are fair. DFOP is a one-pass approach which needs to scan each data item only once. For most ensemble style methods, data items are usually scanned many times for the need of dynamically update ensemble models. Besides, $\text{Learn}^{++}$.NSE is batch basis, nevertheless, DFOP is incremental and could update the model by a single data item. 
\begin{table}[h]
\centering
\small
\caption{Mean and standard deviation of MSE over 10 trails, the lower the better, and the all the values have been multiplied by 100. Besides, $\bullet$ ($\circ$) indicates that DFOP is significantly better (worse) than the compared method (paired $t$-tests at 95\% significance level). Note that the comparisons are unfair to DFOP, because DFOP is able to only scan data once, whereas most compared methods not.}
\label{table:Regression}\vspace{2mm} 
\begin{tabular}{c|cccc}\hline
Methods 	&	 \multicolumn{1}{c}{hyperplane}					&	 \multicolumn{1}{c}{SRU-1}		   		&	 \multicolumn{1}{c}{SRU-2}		   				&	 \multicolumn{1}{c}{Debutanizer}					\\ \hline
RLS			&  2.291(.045)$\bullet$ 	&	0.306(.006)$\bullet$	&	0.392(.005)$\bullet$ 	&	2.467(.003)$\bullet$    \\
OB			&  1.914(.004)$\bullet$ 	&	0.051(.000) 			&	0.142(.000)$\bullet$ 	&	0.577(.000)$\bullet$    \\
AddExp.C	&  0.730(.002)$\bullet$ 	&	0.049(.001)$\circ$ 		&	0.139(.001)$\bullet$ 	&	1.283(.000)$\bullet$    \\
EOS-ELM		&  1.903(.002)$\bullet$ 	&	0.050(.000)			 	&	0.142(.001)$\bullet$ 	&	1.057(.000)$\bullet$    \\
Learn++.NSE	&  1.662(.009)$\bullet$ 	&	0.098(.028)$\bullet$ 	&	0.232(.024)$\bullet$ 	&	1.040(.000)$\bullet$    \\
OAUE		&  1.959(.002)$\bullet$ 	&	\textbf{0.047(.000)}$\circ$ 		&	0.131(.001)$\bullet$ 	&	0.731(.008)$\bullet$    \\
DFOP		&  \textbf{0.619(.048)} 	&	0.063(.004)		&	\textbf{0.064(.004)} 	&	\textbf{0.360(.057)}  	\\  \hline
\end{tabular} 
\end{table}

From Table \ref{table:Regression}, we can see that DFOP outperforms RLS, and achieved a satisfying performance with lowest MSE in three datasets. Meanwhile, since the real drifting property of synthesis dataset is clear, we also present the trend chart of DFOP with different forgetting factors in terms of square loss over generated test data and  estimate error $\lVert \tilde{\mathbf{w}}(t)\rVert = \lVert \mathbf{w}(t)-\hat{\mathbf{w}}(t)\rVert$ in Figure \ref{figure:regression}. And we can see that the error of DFOP is significantly lower than RLS, which is a special case of DFOP when forgetting factor is 0. This proves the effectiveness of forgetting mechanism. Besides, in the right side, we can see that $\lVert \tilde{\mathbf{w}}(t)\rVert$ is tending towards stability as time series go to the infinity which is consistent with our theory proposed in the original paper even the dataset does not exactly meet the assumptions proposed in Theorem 2.
\begin{figure}[t]
\centering
\includegraphics[width=8.4cm]{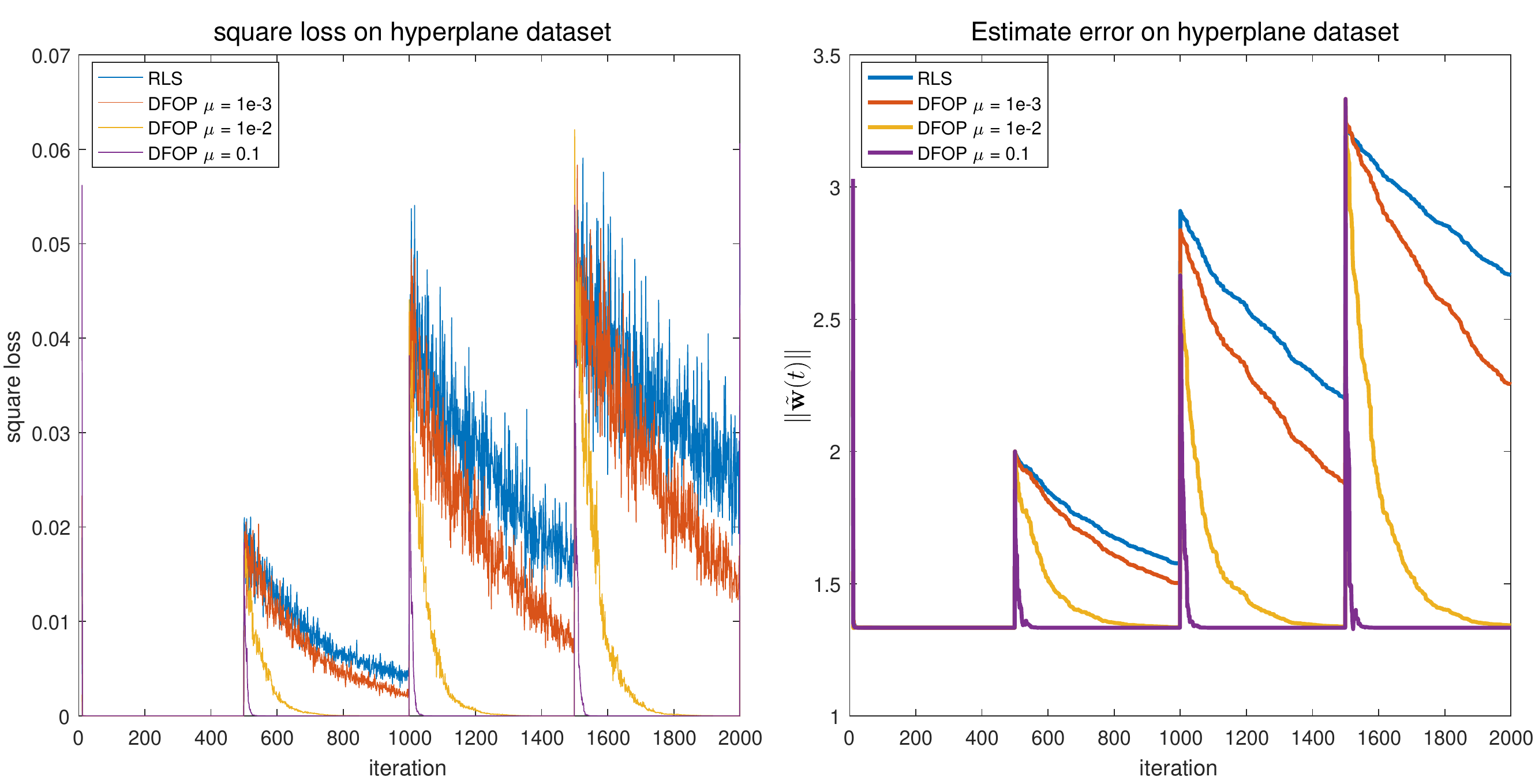}
\caption{A comparison result in terms of square loss over generated test data and $\lVert \tilde{\mathbf{w}}(t)\rVert$ of DFOP with different forgetting factors on hyperplane dataset.}
\label{figure:regression}
\end{figure}

\subsection{Robustness Comparisons}
Additionally, the robustness of all these different algorithms are compared. Concretely speaking, for a particular algorithm \textit{algo}, similar to definition in \cite{conf/kdd/VlachosDGKK02}, the robustness here is defined as the proportion between its accuracy and the smallest accuracy among all compared algorithms,
\[
r_{algo} = \frac{acc_{algo}}{\min_{\alpha} acc_\alpha}
\]
Apparently, the worst algorithm has $r_{algo}=1$, and the others have $r_{algo}\geq 1$, the greater the better. Hence, the sum of $r_{algo}$ over all datasets indicates the robustness of for algorithm $algo$. The greater the value of the sum, the better the performance of the algorithm.

We provide a robustness comparison on six compared algorithms and DFOP over 20 datasets in Figure \ref{figure:robustness}. From the figure, we can see that DFOP achieves the best over 20 datasets, and RLS ranks last as expected since it didn't consider the evolving distribution in datasets at all. For sliding window type approaches, i.e., SVM-win and 1NN-win, their performances are not quite satisfying, even if we have chosen a relatively optimal window size by cross-validation with different data splits. For two batch type approaches, i.e., SVM-fix and SVM-ada. Performance of SVM-fix is comparable to DFOP, however, it's curious that SVM-ada demonstrates a frustrating performance, even worse than SVM-fix. This might because the estimate generalization error is not consistent with empirical one. DWM shows a competitive performance, nevertheless, it needs to scan data many times to maintain a dynamic expert pool. As the only one-pass learning algorithm above except for RLS, DFOP shows a satisfying performance and property in handling data stream with distribution change.
\begin{figure}[t]
\centering
\includegraphics[width=0.28\textwidth]{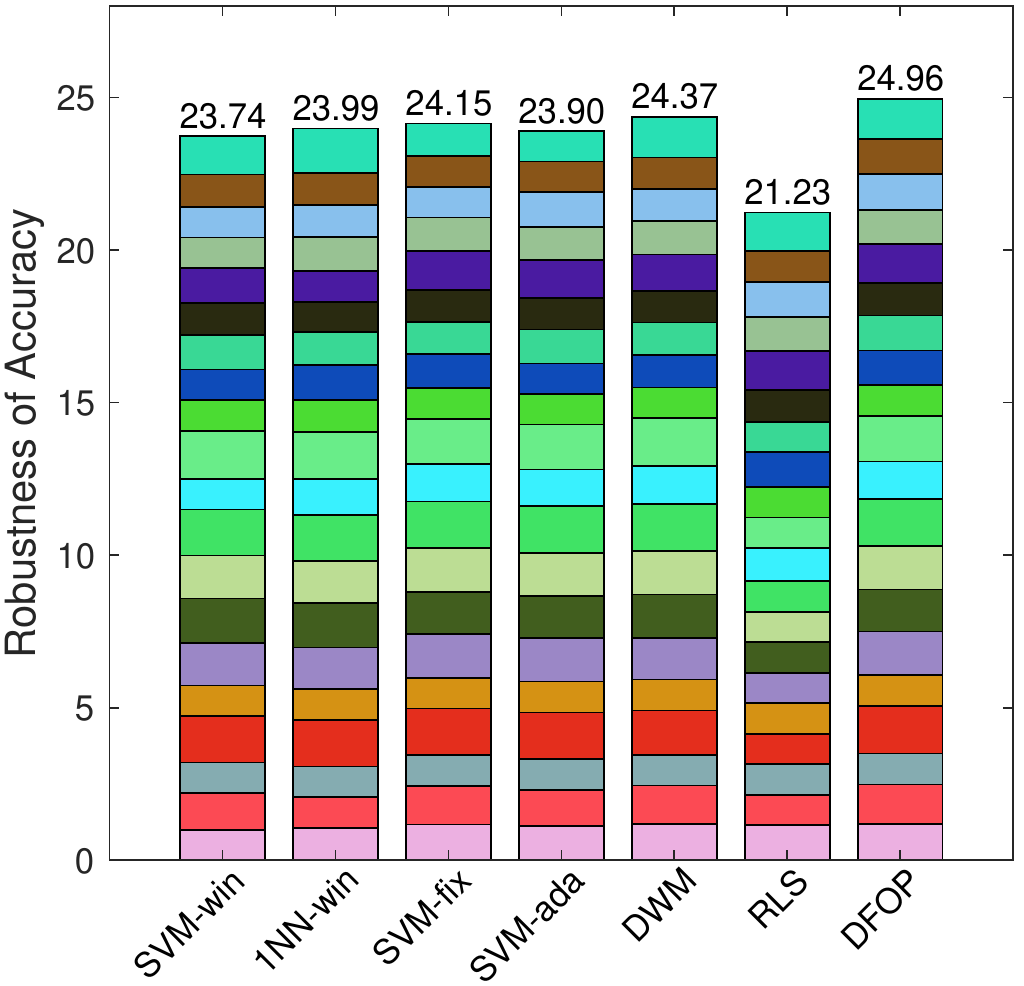}
\caption{Robustness of accuracy on six compared algorithms and DFOP over 20 datasets with distribution change.}
\label{figure:robustness}
\end{figure}

\section{Detailed Descriptions of Datasets}
\subsection{Regression Datasets}
\begin{itemize}
\item[-] \textit{hyperplane}~\cite{conf/icml/KolterM05} is a benchmark synthesis dataset for regression scenario, where feature consists of 10 variables, each is uniform sampled from $[0,1]$. There are in total 2,000 data samples with four stages, target concepts are $(x_i + x_{i+1} + x_{i+2})/3>0.5$, with $i=1,2,4,7$ in each stage.\vspace{-1mm}
\item[-] \textit{Sulfur recovery unit}~\cite{book/springer/fortuna2007soft} is, a real-world dataset which is a record of gas diffusion. Feature consists of 5 different chemical and physical indexes, with in total 10,081 data samples. There are two outputs in original dataset represent the concentration of $\text{SO}_2$ and $\text{H}_2\text{S}$, and we split it into SRU-1 and SRU-2, respectively.  
\vspace{-1mm}
\item[-] \textit{Debutanizer column}~\cite{book/springer/fortuna2007soft} is, a real-world dataset which is a record of chemical reactions, with in total 2,394 data samples consisting of 7 different features. The output represents C4 content in the debutanizer bottoms.     
\end{itemize}

\subsection{Classification Datasets}
\begin{itemize}
\item[-] \textit{Chess}~\cite{journals/ida/Zliobaite11} is constructed by the data from chess.com portal, which consists of game records of players over a period from December 2007 to March 2010, a total of 533 instances with 7 attributes. The label indicates win or loss in the game.
\item[-] \textit{Usenet}~\cite{conf/ecai/KatakisTV08} is split into \textit{Usenet-1} and \textit{Usenet-2} which both consist of 1,500 instances with 100 attributes based on 20 newsgroups collection. They simulate a stream of messages from different newsgroups that are sequentially presented to a user, who then labels them according to his/her personal interests. 
\item[-] \textit{Luxembourg}~\cite{journals/ida/Zliobaite11} is constructed using European Social Survey data. There are 1,900 instance with 32 attributes in total, each instance is an individual and attributes are formed from answers to the survey questionnaire. The label indicates high or low internet usage. 
\item[-] \textit{Spam Detection}~\cite{journals/jiis/KatakisTBBV09}, real-world textual dataset that use email messages from the Spam Assassin Collection, and boolean bag-of-words approach is adopted to represent emails. It consists of 9,324 instances with 500 attributes, and label indicates spam or legitimate.
\item[-] \textit{Weather}~\cite{journals/tnn/ElwellP11} Weather dataset is originally collected from the Offutt Air Force Base in Bellevue, Nebraska. 18,159 instances are presented with an extensive range of 50 years ($1949-1999$) and diverse weather patterns. Eight features were selected based on their availability, eliminating those with a missing feature rate above 15\%. The remaining missing values were imputed by the mean of features in the preceding and following instances. Class labels are based on the binary indicator(s) provided for each daily reading of rain with 18,159 daily readings: 5698 (31\%) positive (rain) and 12,461 (69\%) negative (no rain).
\item[-] \textit{Powersupply}~\cite{journals/archive/chen2015ucr} contains three year power supply records including 29,928 instances with 2 attributes from 1995 to 1998, and our learning task is to predict which hour the current power supply belongs to. We relabeled into binary classification according to p.m. or a.m.
\item[-] \textit{Electricity}~\cite{journal/harries1999splice} is wildly adopted and is collected from the Australian New South Wales Electricity Market where prices are affected by demand and supply of the market. The dataset contains 45,312 instances with 8 features. The class label identifies the change of the price relative to a moving average of the last 24 hours. 
\end{itemize} 

The basic information for all datasets mentioned in the original paper is summarized in the following Table \ref{table:Classification}.
\begin{table*}[h] 
\centering
\caption{Basic information of various datasets with distribution change}\vspace{2mm}\label{table:Classification}
\begin{tabular}{l|rc||l|rc}\hline
Dataset   	& \#instance & \#dim   & Dataset & \#instance & \#dim         \\ \hline
SEA       	& 50,000      & 3     & Chess       & 533         & 7    \\
hyperplane	& 90,000      & 10    & Usenet-1    & 1,500       & 100  \\
1CDT      	& 16,000      & 2     & Usenet-2    & 1,500       & 100  \\
2CDT      	& 16,000      & 2     & Luxembourg  & 1,900       & 32   \\
1CHT      	& 16,000      & 2     & Spam        & 9,324       & 500  \\
2CHT      	& 16,000      & 2     & Weather     & 18,159      & 8    \\
1CSurr    	& 55,283      & 2     & Powersupply & 29,928      & 2   \\
UG-2C-2D    & 100,000     & 2     & Electricity & 45,312      & 8    \\
UG-2C-3D    & 200,000     & 3     & MG-2C-2D    & 200,000     & 2  \\
UG-2C-5D    & 200,000     & 5     & G-2C-2D 	& 200,000     & 2  \\ \hline
\end{tabular}
\end{table*}
\bibliography{reference}
\bibliographystyle{alpha}
\end{document}